\renewcommand\AB@authnote[1]{\rlap{\textsuperscript{\normalfont#1}}}
\title{Dynamic Visualization and Fast Computation for Convex Clustering via Algorithmic Regularization}
\author[1]{Michael Weylandt}
\author[1]{John Nagorski}
\author[1,2,3,4]{Genevera I. Allen}
\affil[1]{Department of Statistics, Rice University}
\affil[2]{Department of Computer Science, Rice University}
\affil[3]{Department of Electrical and Computer Engineering, Rice University}
\affil[4]{Jan and Dan Neurological Research Institute, Baylor College of Medicine}
\newcommand{\carp}{{\tt CARP}\xspace}
\newcommand{\cbass}{{\tt CBASS}\xspace}
\newcommand{\carpviz}{{\tt CARP-VIZ}\xspace}
\newcommand{\cbassviz}{{\tt CBASS-VIZ}\xspace}
\newcommand{\clustRviz}{{\tt clustRviz}\xspace}
\newcommand{\cvxclustr}{{\tt cvxclustr}\xspace}
\newcommand{\cobra}{{\tt COBRA}\xspace}
\newcommand{\tcga}{\textsf{TCGA}\xspace}
\newcommand{\presidents}{\textsf{Presidents}\xspace}
\newcommand{\authors}{\textsf{Authors}\xspace}
\newcommand{\bE}{\bm{E}}
\newcommand{\bX}{\bm{X}}
\newcommand{\bU}{\bm{U}}
\newcommand{\bV}{\bm{V}}
\newcommand{\bZ}{\bm{Z}}
\newcommand{\bP}{\bm{P}}
\newcommand{\bQ}{\bm{Q}}
\newcommand{\bS}{\bm{S}}
\newcommand{\bT}{\bm{T}}
\newcommand{\bD}{\bm{D}}
\newcommand{\bL}{\bm{L}}
\newcommand{\bI}{\bm{I}}
\newcommand{\bw}{\bm{w}}
\newcommand{\bW}{\bm{\Upsilon}}
\newcommand{\br}{\bm{r}}
\newcommand{\bu}{\bm{u}}
\newcommand{\bp}{\bm{p}}
\newcommand{\bq}{\bm{q}}
\newcommand{\bv}{\bm{v}}
\newcommand{\bx}{\bm{x}}
\newcommand{\by}{\bm{y}}
\newcommand{\bz}{\bm{z}}
\newcommand{\R}{\mathbb{R}}
\DeclareMathOperator{\prox}{\textsf{prox}}
\DeclareMathOperator{\vecop}{vec}
\DeclareMathOperator*{\argmin}{arg\,min}
\newtheorem{lem}{Lemma}
\theoremstyle{remark}
\newtheorem*{rem}{Remark}
\date{Last Updated: \today}
\begin{document}
\maketitle
\begin{abstract}

Convex clustering is a promising new approach to the classical problem of clustering, combining strong performance in empirical studies with rigorous theoretical foundations. Despite these advantages, convex clustering has not been widely adopted, due to its computationally intensive nature and its lack of compelling visualizations. To address these impediments, we introduce \emph{Algorithmic Regularization}, an innovative technique for obtaining high-quality estimates of regularization paths using an iterative one-step approximation scheme. We justify our approach with a novel theoretical result, guaranteeing global convergence of the approximate path to the exact solution under easily-checked non-data-dependent assumptions. The application of algorithmic regularization to convex clustering yields the \textbf{C}onvex Clustering via \textbf{A}lgorithmic \textbf{R}egularization \textbf{P}aths (\carp) algorithm for computing the clustering solution path. On example data sets from genomics and text analysis, \carp delivers over a 100-fold speed-up over existing methods, while attaining a finer approximation grid than standard methods. Furthermore, \carp enables improved visualization of clustering solutions: the fine solution grid returned by \carp can be used to construct a convex clustering-based dendrogram, as well as forming the basis of a dynamic path-wise visualization based on modern web technologies. Our methods are implemented in the open-source \texttt{R} package \clustRviz, available at \url{https://github.com/DataSlingers/clustRviz}.

\end{abstract}
\emph{Keywords}: Clustering, Convex Clustering, Optimization, Algorithmic Regularization, Visualization, Dendrograms
\clearpage
\doublespace
\begin{refsection}
\section{Introduction}
Clustering, the task of identifying meaningful sub-populations in unlabelled data, is a fundamental problem in applied statistics, with applications as varied as cancer subtyping, market segmentation, and topic modeling of text documents. A wide range of methods for clustering have been proposed and we do not attempt to make a full accounting here, instead referring the reader to the recent book of \citet{Hennig:2015}. Perhaps the most popular clustering method, however, is hierarchical clustering \citep{Ward:1963}. Hierarchical clustering derives its popularity from an intuitive formulation, efficient computation, and powerful visualizations. Dendrogram plots, which display the family of clustering solutions simultaneously, provide an easily-understood summary of the global structure of the data, allowing the analyst to visually examine the nested group structure of the data. Despite its popularity, hierarchical clustering has several limitations: it is highly sensitive to the choice of distance metric and linkage used; it is a heuristic algorithm which lacks optimality guarantees; and the conditions under which hierarchical clustering recovers the true clustering are unknown. 

To address these limitations, several authors have recently studied a convex formulation of clustering \citep{Pelckmans:2005,Hocking:2011,Lindsten:2011,Chi:2015}. This convex formulation guarantees global optimality of the clustering solution and allows analysis of its theoretical properties \citep{Tan:2015,Zhu:2014,Radchenko:2017,Chi:2018}. Despite these advantages, convex clustering has not yet achieved widespread popularity, due to its computationally intensive nature and lack of dendrogram-based visualizations. In this paper, we address these problems with a efficient algorithm for computing convex clustering solutions with sufficient precision to construct interpretable accurate dendrograms and dynamic path-wise visualizations, thereby  making convex clustering a practical tool for applied data analysis. 

Our main theoretical contribution is the concept of Algorithmic Regularization, a novel computationally efficient approach for obtaining accurate approximations of regularization paths. We provide a theoretical justification for our proposed approach, showing that we can obtain a high-quality approximation simultaneously at all values of the regularization parameter. While we focus on the convex clustering problem, our proposed approach can be applied to a much wider range of problems arising in statistical learning. 

Using algorithmic regularization, we make two methodological contributions related to clustering: first, we propose an efficient algorithm, \carp, for computing convex clustering solutions. \carp is typically over one-hundred times faster than existing approaches, while simultaneously computing a much finer set of solutions than commonly used in practice. Secondly, we propose new visualization strategies for convex clustering based on \carp: a new dendrogram construction based on convex clustering paths and a novel ``path-wise'' visualization, which provides more information about the structure of the estimated clusters. We hope that, thanks to our proposed computational and visualization strategies, convex clustering will become a viable tool for exploratory data analysis. \carp and our proposed visualizations are implemented in our \clustRviz \texttt{R} package, available at  \url{https://github.com/DataSlingers/clustRviz}. 

The remainder of this paper is organized as follows: Section \ref{sec:dendro} reviews convex clustering in more detail and discusses the difficulties entailed in producing dendrograms from convex clustering. Section \ref{sec:carp} introduces the concept of ``Algorithmic Regularization,'' uses it to develop the \carp clustering algorithm, and gives theoretical guarantees of global convergence. Section \ref{sec:comparisons} compares \carp with existing approaches for convex clustering, demonstrating its impressive computational and statistical performance on several data sets. Section \ref{sec:visualization} describes several novel visualizations made possible by the \carp algorithms in the context of an extended text-mining example. Finally, Section \ref{sec:conclusion} concludes the paper with a discussion and proposes possible future directions for investigation.

\section{Convex Clustering and Dendrograms}
\label{sec:dendro}
We seek to represent the convex clustering solution path as a dendrogram, and in this section, discuss both the theoretical conditions and computational considerations for this task.  We first review the basic properties of convex clustering in Section \ref{sec:cclust} and then discuss dendrogram construction from convex clustering in Section \ref{sec:dendro_construction}.

\subsection{Convex Clustering}
\label{sec:cclust}

Let $\bX \in \R^{n \times p}$ denote a data matrix, consisting of $n$ observations
(rows of the matrix) in $p$ dimensions. The convex clustering problem, first discussed
by \citet{Pelckmans:2005} and later explored by \citet{Hocking:2011} and \citet{Lindsten:2011}, is a convex relaxation of the general clustering problem:
\[\argmin_{\bU \in \R^{n \times p}} \frac{1}{2}\|\bX - \bU\|_F^2 \quad  \text{  subject to  } \quad \sum_{1 \leq i < j \leq n} \mathbbm{1}_{\bU_{i\cdot} \neq \bU_{j\cdot}} \leq t,\]
where $\bU_{i\cdot}$ is the $i^{\text{th}}$ row of $\bU$.
Replacing the non-convex indicator function with an $\ell_q$-norm of the difference, we obtain the convex clustering problem:
\begin{equation}
\hat{\bU}_{\lambda} = \argmin_{\bU \in \R^{n \times p}} \frac{1}{2} \|\bX - \bU\|_F^2 + \lambda \left(\sum_{1 \leq i < j \leq n}  w_{ij} \|\bU_{i\cdot} - \bU_{j\cdot}\|_q\right).\label{eqn:cclust}
\end{equation}
Note that we have included non-negative fusion weights $\{w_{ij}\}$ in our convex relaxation. We will say more about the computational and statistical roles played by these weights below. 

The squared Frobenius norm loss function favors solutions which minimize the Euclidean distance between observations and their estimated centroids, while the fusion penalty term encourages the differences in columns of $\hat{\bU}_{\lambda}$ to be shrunk to zero. We interpret the solution $\hat{\bU}_{\lambda}$ as a matrix of cluster centroids, where each observation $\bX_{i\cdot}$ belongs to a cluster with centroid $(\hat{\bU}_{\lambda})_{i\cdot}$. For sufficiently large values of $\lambda$, the columns of $\hat{\bU}_{\lambda}$ will be shrunk together by the penalty term. We say that points with the same centroid belong to the same cluster; that is, the observations $\bX_{i\cdot}$ and $\bX_{j\cdot}$ are assigned to the same cluster if $(\hat{\bU}_{\lambda})_{i\cdot} = (\hat{\bU}_{\lambda})_{j\cdot}$

A major advantage of convex clustering is that the solution $\hat{\bU}_{\lambda}$ smoothly interpolates between clustering solutions, yielding a continuous path of solutions indexed by $\lambda$. At $\lambda = 0$, $\hat{\bU}_{\lambda = 0} = \bX$, resulting in a solution of $n$ distinct clusters, with each observation as the centroid of its own cluster. As $\lambda$ is increased, the fusion penalty encourages the columns of $\hat{\bU}_{\lambda}$ to merge together, inducing a clustering behavior. Finally, when $\lambda$ is large, all columns of $\hat{\bU}_{\lambda}$ are fully merged, yielding a single cluster centroid equal to the grand mean of the columns of $\bX$. Thus, the penalty parameter $\lambda$ determines both the number of clusters and the cluster assignments.

The choice of the fusion weights $\bw \in \R^{\binom{n}{2}}_{\geq 0}$ has a large effect on the statistical accuracy and computational efficiency of convex clustering. When uniform weights are used, convex clustering has a close connection to single-linkage hierarchical clustering, as shown by \citet{Tan:2015}. More commonly, weights inversely proportional to the distances between observations are used and have been empirically demonstrated to yield superior performance \citep{Hocking:2011,Chi:2015,Chi:2017}. Furthermore, setting many of the weights to zero dramatically reduces the computational cost associated with computing the convex clustering solution. We typically prefer using the rotation-invariant $\ell_2$-norm in the fusion penalty ($q = 2$), but one could employ $\ell_1$ or $\ell_{\infty}$ norms as well. 

By formulating clustering as a convex problem, it becomes possible to analyze its theoretical properties using standard techniques from the high-dimensional statistics literature. \citet{Tan:2015} show a form of prediction consistency and derive an unbiased estimator of the effective degrees of freedom associated with the solution. \citet{Zhu:2014} give sufficient conditions for exact cluster recovery at a fixed value of $\lambda$ (``sparsistency''). Like us, \citet{Radchenko:2017} are interested in properties of the entire solution path and give conditions under which convex clustering solutions \eqref{eqn:cclust} asymptotically yield the true dendrogram. 

\subsection{Constructing Dendrograms from Convex Clustering Paths}
\label{sec:dendro_construction}

In this paper, we propose to represent the convex clustering solution path as a dendrogram, an example of which is shown in Figure~\ref{fig:viz_dendro}. The convex clustering dendrogram is interpreted in much the same way as the classical dendrogram. As individual observations or groups of observations are fused together by the fusion penalty, they are denoted by merges in the tree structure.  The height of the merge in the tree structure is given by the value of the regularization parameter, $\lambda$, or more precisely $\log(\lambda)$, at which the fusion occurred. Thus, observations that fuse at small values of $\lambda$ are denoted by merges at the bottom of the dendrogram structure. As with hierarchical clustering, one can cut the dendrogram horizontally at a specific height to yield the associated clusters, and we can interpret the tree height at which merges occur as indicative of the similarity between groups. Before proceeding, however, it is natural to ask whether it is even possible to represent the convex clustering solution path as a dendrogram. This simple question turns out to have a rather subtle answer.

\begin{figure}
  \centering
  \includegraphics[width=4in]{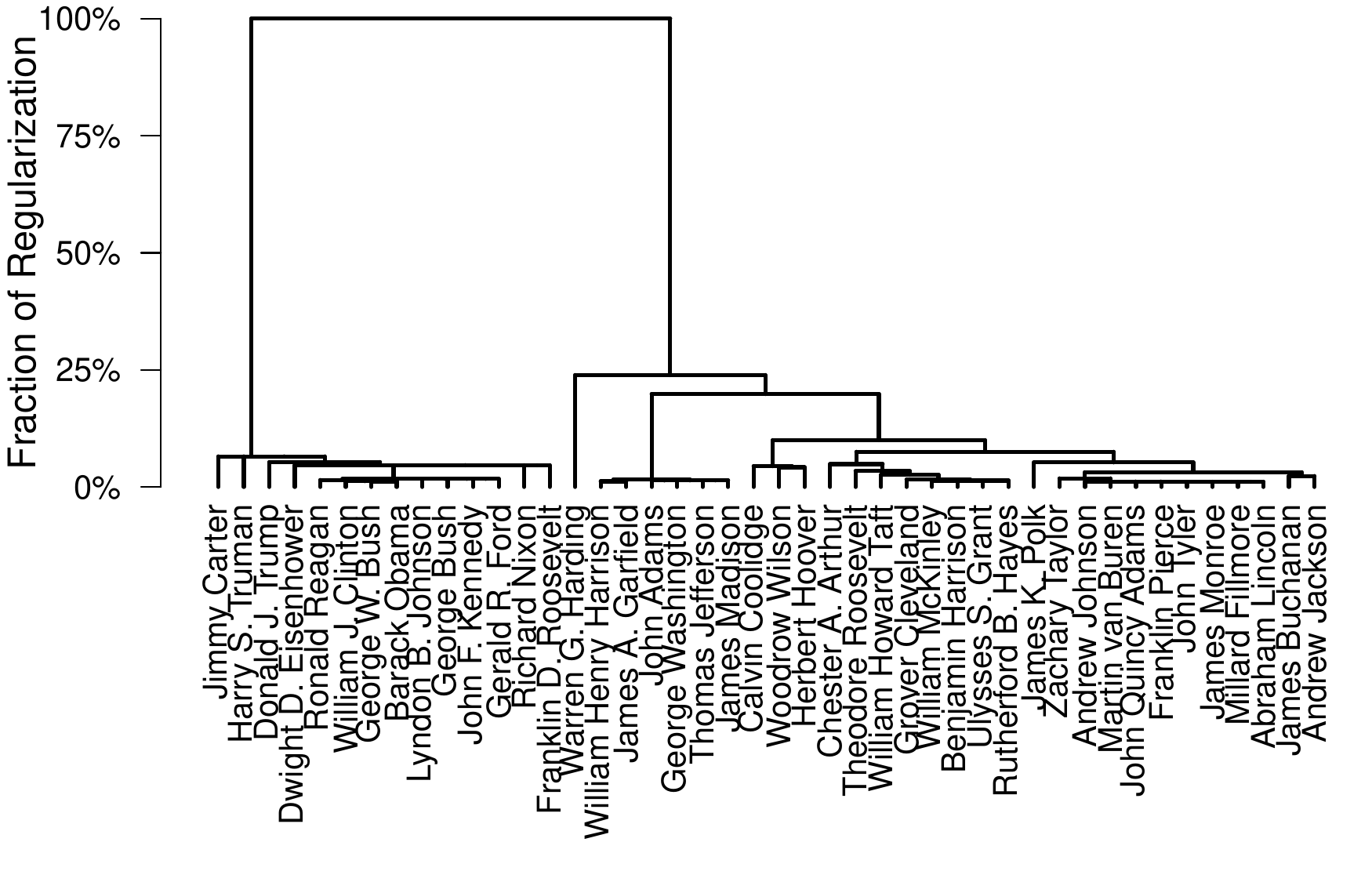}
  \caption{ A convex clustering dendrogram, displaying the 44 U.S. presidents. The interpretation of this dendrogram is discussed in more detail in Section \ref{sec:visualization}.}
  \label{fig:viz_dendro}
\end{figure}

There are two possible impediments to finding the desired dendrogram  representation: i) it may be impossible to represent the exact solution path as a dendrogram; and ii) it may be unrealistic to compute the solution path with enough precision to form a dendrogram. We first consider the question of whether the exact solution path admits a dendrogram representation. It is easy to observe that the exact solution path can only be written as a dendrogram if it is agglomerative, \emph{i.e.}, if the solution path consists of only fusions and no fissions. \citet{Hocking:2011} showed that if an $\ell_1$-norm is used for the fusion penalty and the weights are uniform then the solution path is  agglomerative, but their analysis does not generalize to arbitrary norms or arbitrary weight schemes. \citet{Chi:2018} showed that it is possible to select weights to yield a  agglomerative path, but their analysis applies only to a specific weighting scheme. In general, for an arbitrary data-driven  choice of weights, there is no theoretical guarantee that the solution path will be  agglomerative. In our experience, however, the solution path is agglomerative except in pathological situations.

Assuming that the exact solution path is agglomerative, we still must determine whether the solution path, \emph{as calculated}, is  agglomerative. While, in theory, this poses no additional challenge as the solution path is a continuous function of $\lambda$, in practice this poses a nearly insurmountable computational challenge. To construct a dendrogram, we require the exact values of $\lambda$ at which each fusion occurs. Since these values of $\lambda$ are not known \emph{a priori}, we are faced with a double burden: we must identify a critical set of values of $\lambda$ and solve the convex clustering problem \eqref{eqn:cclust} at those values. There are two widely-used approaches to finding the critical set of $\lambda$'s, path-wise algorithms and grid search, but, as we will show below, neither approach is sufficient in this case. 

Path-wise algorithms, such as those proposed for the Lasso \citep{Osborne:2000, Efron:2004} or generalized Lasso \citep{Tibshirani:2011} problems, compute the entire solution path exactly, identifying each value of $\lambda$ at which a sparsity event, equivalent to a fusion in our case, occurs. These algorithms are typically based on piece-wise linearity of the underlying solution path, which allows for smooth interpolation between sparsity events. \citet{Rosset:2007} studied the conditions under which solution paths are piece-wise linear and hence under which path-wise algorithms can be developed. It is easy to verify that these conditions do not hold with our preferred $\ell_2$-norm fusion penalty, so the solution path is not piece-wise linear and hence a path-wise algorithm cannot be employed for the convex clustering problem \eqref{eqn:cclust}. We note that several authors have proposed path-wise algorithms which can theoretically handle non-piece-wise-linear paths, but require solving an ordinary differential equation exactly \citep{Wu:2011,Zhou:2014,Xiao:2015}; in practice, these methods are computationally intensive and only approximate the path at a series of grid points, similar to the iterative methods we discuss next. For the $\ell_1$-norm fusion case, it is possible to apply path-wise algorithms for weights with specific graphical structures (see the discussion in \citet{Hocking:2011} or the examples considered in \citet{Tibshirani:2011})), but not for arbitrary graph structures with arbitrary weights, again eliminating the possibility of a general-purpose path-wise algorithm. 

Since there exists no exact path-wise algorithm, we might instead compute the convex clustering solution at a series of discrete points corresponding to a regular grid of $\lambda$'s. As we will show in Section~\ref{sec:comparisons}, however, this strategy is still computationally burdensome even using state-of-the-art algorithms and warm-start techniques. For example, the fastest algorithm considered by \citet{Chi:2015}, an Accelerated Alternating Minimization Algorithm, takes 6.87 hours and 19.81 hours to compute the solution path at 100 and 1000 regularly spaced $\lambda$'s, respectively, on a relatively small data set of dimension $n=438$ and $p = 353$. Furthermore, a grid of 100 or 1000 $\lambda$'s does not give us the value of $\lambda$ at which each fusion event occurs, which we need to construct a dendrogram.  Even if one wants to construct a dendrogram using only the order in which each fusion event occurs and their associated approximate values of $\lambda$, computing the path along a grid of 100 or 1000 $\lambda$'s only uniquely resolves 11.4\% or 37.44\% respectively of the fusion events needed to construct a dendrogram.  (See Table~\ref{tab:fusions} in Section~\ref{sec:comparisons} for complete results.)

In general, the computational cost of performing convex clustering is so high that it precludes its use as a practical tool for clustering and exploratory data analysis.  Further, computing the entire convex clustering solution path with fine enough precision to construct a dendrogram is an all but insurmountable task given existing computational algorithms for convex clustering. We seek to address this problem in this paper, using a novel computational technique which provides a fine grid of high-quality estimates of the regularization path. We introduce our approach and the clustering algorithm it suggests, \carp, in the next section.

\section{\carp: Convex Clustering via Algorithmic Regularization Paths}
\label{sec:carp}

We now turn our attention to efficiently computing solutions to the convex clustering problem \eqref{eqn:cclust} for a fine grid of $\lambda$, with a goal of dendrogram construction. Like many problems in the ``loss + penalty'' form, the convex clustering problem is particularly amenable to operator-splitting schemes such as the Alternating Direction Method of Multipliers (ADMM) \citep{Boyd:2011}. In statistical learning, we are often interested in the solution to a regularized estimation problem at a large number of values of $\lambda$. In this context, the performance of ADMMs is further improved by the use of ``warm-starts:'' if the ADMM is initialized near the solution, usually the solution at the previous value of $\lambda$, only a few iterations are typically required to obtain a solution which is accurate up to the statistical uncertainty inherent in the problem.

We propose an extreme version of this approach which we call \emph{Algorithmic Regularization}. Instead of running the ADMM to convergence, we take only a single ADMM step, after which we move to the next value of $\lambda$. By taking only a single ADMM step, we can significantly reduce the computational cost associated with estimating a regularization path. We can then use these computational savings to solve for a much finer grid of $\lambda$'s than we would typically use if employing a standard scheme. In essence, Algorithmic Regularization allows us to exchange computing an exact solution for a small set of $\lambda$'s for calculating a highly accurate approximation at a large set of $\lambda$'s. Usefully, we can now use a $\lambda$ grid with sufficiently fine resolution that we can fully capture the desired dendrogram structure in a reasonable amount of time.

\citet{Chi:2015} first considered the use of the ADMM to solve the convex clustering problem \eqref{eqn:cclust}. To apply the ADMM, we introduce $\bD$, the directed difference matrix used to calculate to the pairwise differences of rows of $\bU$, and an auxiliary variable $\bV$, corresponding to the matrix of between-observation differences:
\[\argmin_{\substack{\bU \in \R^{n \times p}\\\bV \in \R^{\binom{n}{2}\times p}}} \frac{1}{2}\|\bX - \bU\|_F^2 + \lambda\underbrace{\sum_{k = 1}^{\binom{n}{2}} w_k \|\bV_{k\cdot}\|_q}_{P(\bV; \bw, q)} \text{ subject to } \bV = \bD\bU.\] Applying the ADMM with warm-starts to the above, we obtain the following algorithm: 
\begin{algorithm}[H]
Initialize $l = 0$, $\lambda_l = \epsilon$, $\bV^{(0)} = \bZ^{(0)} = \bD\bX$\\
Repeat until $\|\bV^{(k)}\| = 0$: 
\begin{itemize}
\item Repeat until convergence:
\begin{enumerate}
\item[(i)] $\bU^{(k + 1)} = \bL^{-T}\bL^{-1}\left(\bX + \bD^T(\bV^{(k)} - \bZ^{(k)})\right)$
\item[(ii)] $\bV^{(k+1)} = \prox_{\lambda_l P(\cdot; \bw, q)}\left(\bD \bU^{(k + 1)} + \bZ^{(k)}\right)$
\item[(iii)] $\bZ^{(k + 1)} = \bZ^{(k)} + \bD \bU^{(k + 1)} - \bV^{(k+1)}$
\item[(iv)] $k := k + 1$
\end{enumerate}
\item Store $\hat{\bU}_{\lambda_l} = \bU^{(k)}$
\item Update regularization: $l := l + 1$; $\lambda_l := \lambda_{l - 1} * t$
\end{itemize} 
Return $\{\hat{\bU}_{\lambda}\}$ as the regularization path
\caption{Warm-Started ADMM for the Convex Clustering Problem \eqref{eqn:cclust}}
\label{alg:short_admm}
\end{algorithm}\vspace{-0.3in}
where $\bZ$ is the dual variable with the same dimensions as $\bV$, $\bL$ is Cholesky factorization of $\bI + \bD^T\bD$, and $\prox_{f(\cdot)}(\bx) = \argmin_{\bz} \frac{1}{2}\|\bx - \bz\|_2^2 + f(\bz)$ is the proximal mapping of a general function $f$. Note that, if sparse weights are used, the corresponding rows of $\bD$, $\bV$, and $\bZ$ may be omitted, yielding more efficient updates. Additionally, note that, because we use a multiplicative update for $\lambda$, we must initialize at $\lambda = \epsilon$, for some small $\epsilon$, rather than at $\lambda = 0$. A derivation and more detailed statement of this algorithm are given in Section \ref{app:derivations} of the Supplementary Materials.

We now take Algorithm \ref{alg:short_admm} as the basis for our extreme early stopping strategy of Algorithmic Regularization. Removing the the inner loop, we obtain the following scheme, which we refer to as \carp--\textbf{C}onvex Clustering via \textbf{A}lgorithmic \textbf{R}egularization \textbf{P}aths:
\begin{algorithm}[H]
Initialize $k = 0$, $\gamma^{(k)} = \epsilon$, $\bV^{(0)} = \bZ^{(0)} = \bD\bX$\\
Repeat until $\|\bV^{(k)}\| = 0$: 
\begin{enumerate}
\item[(i)] $\bU^{(k + 1)} = \bL^{-T}\bL^{-1}\left(\bX + \bD^T(\bV^{(k)} - \bZ^{(k)})\right)$
\item[(ii)] $\bV^{(k+1)} = \prox_{\gamma^{(k)} P(\cdot; \bw, q)}\left(\bD \bU^{(k + 1)} + \bZ^{(k)}\right)$
\item[(iii)] $\bZ^{(k + 1)} = \bZ^{(k)} + \bD \bU^{(k + 1)} - \bV^{(k+1)}$
\item[(iv)] $k := k + 1$, $\gamma^{(k)} = \gamma^{(k - 1)} * t$
\end{enumerate}
Return $\{\bU^{(k)}\}$ as the \carp path.\vspace{0.1in}
\caption{\carp: Algorithmic Regularization for the Convex Clustering Problem \eqref{eqn:cclust}}
\label{alg:short_carp}
\end{algorithm}\vspace{-0.3in}
The fundamental difference between Algorithm \ref{alg:short_admm} and Algorithm \ref{alg:short_carp} is that Algorithm \ref{alg:short_carp} does not have an ``inner loop'' in which ADMM iterates are repeated until convergence to the exact solution for a fixed value of the regularization parameter. As such, the \carp iterates $\{\bU^{(k)}\}$ are not exact solutions to the convex clustering problem \eqref{eqn:cclust} for any value of $\lambda$, though they are typically accurate approximations in a sense that Theorem \ref{thm:hausdorff} below makes precise. Our notation reflects this distinction and replaces $\lambda_l$ with $\gamma^{(k)}$ in the $\bV$-update to avoid suggesting any false equivalence. A more detailed formulation of the \carp algorithm is given in Section \ref{app:derivations} of the Supplementary Materials. 

The role of the step-size parameter $t$ in Algorithm \ref{alg:short_carp} is particularly important in understanding \carp. The step size $t$ controls the fineness of the $\{\gamma^{(k)}\}$ grid used internally by \carp and, as such, serves as a \emph{computational} tuning parameter controlling how well the \carp path approximates the true convex clustering path. Decreasing $t$ has benefits for both \emph{local} and \emph{global} accuracy of the \carp path: a smaller value of $t$ yields an approximate solution path which has a finer set of grid points $\{\gamma^{(k)}\}$ (improved global accuracy) and more accurate approximations $\{\bU^{(k)}\}$ at each of those grid points (improved local accuracy). This is in contrast to standard approaches where the user has to pre-specify the $\{\lambda_l\}$ grid used and the stopping tolerance of the iterative algorithm to strike a balance between local accuracy and global accuracy obtainable in a given amount of time. As we will see in Section \ref{sec:comparisons}, the Algorithmic Regularization strategy of replacing an iterative algorithm with a one-step approximation thereof allows us to improve both local and global accuracy at a fraction of the cost of competing methods. 

While this may all seem rather fishy, the following theorem shows that, in the limit of small changes to the regularization level (\emph{i.e.}, $(t, \epsilon) \to (1, 0)$), there is indeed no loss in accuracy induced by the one-step approximation. In fact, we are able to show a very strong form of convergence, so-called \emph{Hausdorff} convergence, in both the primal and dual variables. Hausdorff convergence implies two different convergence results hold simultaneously for both the primal and dual variables.  The first, $\sup_{\lambda} \inf_k \left\|\bU^{(k)} - \hat{\bU}_{\lambda}\right\| \to 0$, implies every convex clustering solution will be recovered by \carp as $(t,\epsilon) \to (1, 0)$. The second, $\sup_{k} \inf_{\lambda} \left\|\bU^{(k)} - \hat{\bU}_{\lambda}\right\| \to 0$, implies that any clustering produced by  \carp as $(t, \epsilon) \to (1, 0)$ is a valid convex clustering solution for some $\lambda$. More memorably, Theorem \ref{thm:hausdorff} shows that asymptotically \carp produces ``the whole regularization path and nothing but the regularization path:''

\vspace{0.1in}\begin{restatable}{thm}{hausdorff} \label{thm:hausdorff}
As $(t, \epsilon) \to (1, 0)$, where $t$ is the multiplicative step-size update and $\epsilon$ is the initial regularization level, the primal and dual \carp paths converge to the primal and dual convex clustering paths in the Hausdorff metric: that is,
\begin{align*}
d_H(\{\bU^{(k)}\}, \{\hat{\bU}_{\lambda}\}) \equiv \max\left\{\sup_{\lambda} \inf_k \left\|\bU^{(k)} - \hat{\bU}_{\lambda}\right\|, \sup_{k} \inf_\lambda \left\|\bU^{(k)} - \hat{\bU}_{\lambda}\right\|\right\}  \xrightarrow{(t, \epsilon) \to (1, 0)} 0\\
d_H(\{\bZ^{(k)}\}, \{\hat{\bZ}_{\lambda}\}) \equiv \max\left\{\sup_{\lambda} \inf_k \left\|\bZ^{(k)} - \hat{\bZ}_{\lambda}\right\|, \sup_{k} \inf_\lambda \left\|\bZ^{(k)} - \hat{\bZ}_{\lambda}\right\|\right\} \xrightarrow{(t, \epsilon) \to (1, 0)} 0
\end{align*}
where $\bU^{(k)}, \bZ^{(k)}$ are the values of the $k^{\textrm{th}}$ \carp iterate and $\hat{\bU}_{\lambda}, \hat{\bZ}_{\lambda}$ are the exact solutions to the convex clustering problem \eqref{eqn:cclust} and its dual at $\lambda$.
\end{restatable}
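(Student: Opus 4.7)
My plan is to view \carp as a perturbed fixed-point iteration whose target $(\hat{\bU}_\lambda, \hat{\bZ}_\lambda)$ slides continuously with $\lambda$ at each outer step. The argument will rest on three ingredients: (i) Lipschitz continuity of the exact primal-dual path $\lambda \mapsto (\hat{\bU}_\lambda, \hat{\bZ}_\lambda)$ on the compact interval $[0, \lambda^*]$, where $\lambda^* < \infty$ is the value at which all rows first fuse (beyond this, the exact path is constant and equal to the grand-mean matrix, and eventually $\|\bV^{(k)}\|=0$ in \carp); (ii) linear contraction with rate $c \in (0,1)$ of the one-step ADMM operator $S_\lambda:(\bV, \bZ) \mapsto (\bV', \bZ')$ toward its fixed point $(\hat{\bV}_\lambda, \hat{\bZ}_\lambda)$ in a suitable metric $\|\cdot\|_M$, uniformly in $\lambda$; and (iii) density of the multiplicative grid $\{\gamma^{(k)}\}$ in $[\epsilon, \lambda^*]$ as $(t,\epsilon) \to (1,0)$. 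Ingredient (i) follows by comparing first-order optimality conditions at nearby values of $\lambda$ and using $1$-strong convexity of the Frobenius loss in $\bU$ together with uniform boundedness of the sub-differential of $P(\cdot;\bw,q)$; ingredient (ii) follows from standard linear-convergence results for ADMM on strongly-convex-plus-convex problems.

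The core of the proof is an inductive bound on the error $\delta_k = \|(\bV^{(k)}, \bZ^{(k)}) - (\hat{\bV}_{\gamma^{(k)}}, \hat{\bZ}_{\gamma^{(k)}})\|_M$. One \carp step at parameter $\gamma^{(k)}$ is exactly one application of $S_{\gamma^{(k)}}$; combining its contraction with the triangle-inequality bound $\|(\hat{\bV}_{\gamma^{(k)}}, \hat{\bZ}_{\gamma^{(k)}}) - (\hat{\bV}_{\gamma^{(k-1)}}, \hat{\bZ}_{\gamma^{(k-1)}})\|_M \leq \eta_k$ yields $\delta_k \leq c(\delta_{k-1} + \eta_k)$. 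Since $\gamma^{(k)} = t\gamma^{(k-1)} \leq \lambda^*$, Lipschitzness of the exact path gives $\eta_k = O(\lambda^*(t-1))$ uniformly in $k$, so iterating produces
\[
\sup_k \delta_k \;\leq\; c^k \delta_0 \;+\; O\!\left(\frac{t-1}{1-c}\right) \;\xrightarrow{\;t\to 1\;}\; 0.
\]
Transferring this bound from $(\bV,\bZ)$ to $\bU$ via the linear update $\bU^{(k+1)} = \bL^{-T}\bL^{-1}\bigl(\bX + \bD^T(\bV^{(k)} - \bZ^{(k)})\bigr)$ and the analogous closed-form expression at the exact solution yields $\sup_k \inf_\lambda \|\bU^{(k)} - \hat{\bU}_\lambda\| \to 0$, the ``nothing but the path'' direction. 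The ``whole path'' direction $\sup_\lambda \inf_k \|\bU^{(k)} - \hat{\bU}_\lambda\| \to 0$ then follows from ingredient (iii): any $\lambda \in [\epsilon, \lambda^*]$ is close to some $\gamma^{(k)}$, so $\|\hat{\bU}_\lambda - \hat{\bU}_{\gamma^{(k)}}\|$ is small by Lipschitzness and $\|\hat{\bU}_{\gamma^{(k)}} - \bU^{(k)}\|$ is small by the above bound. The initial segment $\lambda \in [0, \epsilon]$ is handled by continuity of the path at $\lambda = 0$ (where $\hat{\bU}_0 = \bX$) and the initialization $\bU^{(0)} = \bX$; the dual statement is analogous.

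The main obstacle I anticipate is securing ingredient (ii): genuine linear contraction with a rate $c < 1$ uniform in $\lambda$. Mere non-expansiveness will not suffice, because the summed path variation $\sum_k \eta_k$ converges to $\lambda^*$ (a strictly positive constant) as the grid refines rather than to zero, so an inductive bound of the form $\delta_k \leq \delta_{k-1} + \eta_k$ could not be driven to zero. Only a genuine rate converts the increasing number of outer steps into geometric damping that absorbs the vanishing per-step perturbations. Because $\lambda$ enters only through the proximal operator in the $\bV$-update, I expect that working in the natural ADMM metric induced by $\bI + \bD^T\bD$ will give a $\lambda$-independent rate; verifying this carefully, and specifying $\|\cdot\|_M$ precisely, is the technical heart of the argument. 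A secondary but non-trivial point is selecting the dual path $\hat{\bZ}_\lambda$ continuously down to $\lambda = 0$ so that it is compatible with the \carp initialization $\bZ^{(0)} = \bD\bX$; since the primal is strictly convex the primal path is unambiguous, so this reduces to a selection argument on the dual side.
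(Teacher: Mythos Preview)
Your proposal is correct and follows essentially the same route as the paper: the paper's proof also rests on exactly your three ingredients (its Lemma~2 is your Lipschitz path, its Lemma~1 is your uniform linear contraction of the ADMM step, and it uses grid density in the same way), and its Lemma~3 is precisely the inductive recursion $\delta_k \leq c(\delta_{k-1} + \eta_k)$ you describe.

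Two minor differences are worth noting. First, you work with the state $(\bV,\bZ)$ and transfer to $\bU$ at the end via the linear $\bU$-update, whereas the paper works directly with the pair $(\bU,\bZ)$; these are equivalent since $(\bV^{(k)},\bZ^{(k)})$ determines $\bU^{(k+1)}$. Second, your uniform bound $\eta_k = O(\lambda^*(t-1))$ is cruder than the paper's $\eta_k = L\gamma^{(k-1)}(t-1)$ but actually buys you something: because the exact path is constant beyond $\lambda^*$, your bound holds for all $k$ regardless of whether $\gamma^{(k)}$ has overshot $\lambda^*$, so your summed recursion yields $\sup_k \delta_k \leq \delta_0 + \tfrac{c}{1-c}L\lambda^*(t-1)$ directly. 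The paper's sharper $\eta_k$ produces a factor $\epsilon t^k$ in its error bound, which forces it to devote a separate technical step (its Step~ii) to showing that $\epsilon t^{k^*}$ stays bounded at termination; your formulation sidesteps this. The obstacles you flag---uniformity of the contraction rate in $\lambda$, and continuous selection of $\hat{\bZ}_\lambda$ compatible with the initialization $\bZ^{(0)}=\bD\bX$---are real; the paper handles the first by taking $c = \sup_{\lambda \leq \lambda_{\max}} c_\lambda$ after citing Hong--Luo-type results, and is somewhat informal on the second.
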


A full proof of Theorem \ref{thm:hausdorff} is given in Section \ref{app:proof} of the Supplementary Materials, but we highlight the three essential elements here: i) we obtain a high-quality initialization at the first step by setting $\bU^{(0)} = \bX$ which is the exact solution at $\lambda = 0$  ($\hat{\bU}_{\lambda = 0} = \bX$); ii) the convex clustering problem \eqref{eqn:cclust} is strongly convex due to the squared Frobenius norm loss, so the ADMM converges quickly (linearly); and iii) the solution path is Lipschitz as a function of $\lambda$, so $\hat{\bU}_{\lambda}$ does not vary too quickly.  Putting these together, we show that \carp can ``track'' the exact solution path closely, with the approximation error at each step decreasing at a faster rate than the exact solution changes. We emphasize that both strong convexity and Lipschitz solution paths are features of the optimization problem, not the specific data, and are easily checked in practice. A careful reading of the proof of Theorem \ref{thm:hausdorff} will reveal that our analysis applies to a much wider class of problems than convex clustering. We consider the application of algorithmic regularization to the closely related problem of convex bi-clustering \citep{Chi:2017} in Section \ref{sec:cbass} of the Supplementary Materials, where we develop the \cbass (\textbf{C}onvex \textbf{B}i-Clustering via \textbf{A}lgorithmic Regularization with \textbf{S}mall \textbf{S}teps) algorithm, but we leave examination of the more general phenomenon of Algorithmic Regularization to future work.

While Theorem \ref{thm:hausdorff} implies that a sufficiently small choice of step size $t$ allows for exact dendrogram recovery, in practice it is often challenging to select $t$ sufficiently small without requiring excessive computation. Instead, we take a small, but not infinitesimal, value of $t$ and add a back-tracking step to ensure that fusions necessary for dendrogram construction are exactly identified. We refer to the back-tracking version of \carp as \carpviz, for reasons which will be clarified in Section \ref{sec:comparisons}. Furthermore, a post-processing step can be used to find fusions that back-tracking is unable to isolate. Details of the back-tracking and post-processing rules, as implemented in \clustRviz, are given in Section \ref{app:backtrack} of the Supplementary Materials.

Algorithmic Regularization, as used here, was first discussed in \citet{Hu:2016} without theoretical justification and was successfully applied to unmixing problems in hyperspectral imaging by \citet{Drumetz:2017}. We emphasize that, while grounded in standard optimization techniques, algorithmic regularization takes a different perspective than other commonly-used computational approaches, more akin to function approximation than standard optimization. The algorithmic regularization perspective is principally concerned with recovering the overall structure of the solution path than with obtaining the most accurate solution possible at a fixed value of $\lambda$. As such, our Theorem \ref{thm:hausdorff} is of a different character than similar results appearing in the optimization literature, making a claim of global path-wise convergence rather than local point-wise convergence. This ``holistic'' viewpoint is necessary to recover dendrograms, a major goal of this paper, but has also recently been found useful for choosing tuning parameters in penalized regression problems \citep{Chichignoud:2016}.  

\subsection{Related Work}
Several authors have considered path approximation algorithms not unlike \carp, often in the context of boosting algorithms. \citet{Rosset:2004}, \citet{Zhao:2007}, and \citet{Friedman:2012} all consider iterative algorithms which approximate solution paths of regularized estimators. Of these, the approach of \citet{Zhao:2007}, who consider a path approximation algorithm for the Lasso, is most similar to our own. Assuming strong convexity, their \texttt{BLasso} algorithm exactly recovers the lasso solution path as the step-size goes to zero. Their algorithm can be viewed as an application of algorithmic regularization to greedy coordinate descent with an additional back-tracking step to help isolate events of interest (variables entering or leaving the active set). Our algorithmic regularization strategy is simpler than their approach, as it does not require the back-tracking step, and can be applied to more general penalty functions.

\citet{Clarkson:2010} and \citet{Giesen:2012} consider the problem of obtaining approximate solutions for a set of parameterized problems subject to a simplex constraint, though their approach still requires running an optimization step until approximate convergence at each step, Building on this, \citet{Tibshirani:2015} proposes a general framework for constructing ``stagewise'' solution paths, which can be interpreted as an application of algorithmic regularization to the Frank-Wolfe algorithm \citep{Jaggi:2013}. He shows that the stagewise estimators achieve the optimal objective value as the step-size is taken to zero; if a strong convexity assumption is added, it is not difficult to extend his Theorem 2 to recover a result similar to our Theorem \ref{thm:hausdorff}. Our framework is more general than his, as we do not require require the gradient of the loss function to be Lipschitz and we admit more general regularizers.

\section{Numerical and Timing Comparisons} \label{sec:comparisons}

Having introduced \carp and given some theoretical justification for its use, we now consider its performance on representative data sets from text analysis and genomics. As we will show, \carp achieves the superior clustering performance of convex clustering at a small fraction of the computational cost. Throughout this section, we use two example data sets: \tcga and \authors. The \tcga data set ($n = 438$, $p = 353$) contains log-transformed Level III RPKM gene expression levels for 438 breast-cancer patients from \citet{TCGA:2012}. The \authors data set ($n=841, p = 69$) consists of word counts from texts written by four popular English-language authors (Austen, London, Shakespeare, and Milton). For all comparisons, we use \clustRviz's default sparse Gaussian kernel weighting scheme described in the package documentation. For timing comparisons, the Accelerated ADMM and AMA proposed by \citet{Chi:2015} and implemented in their \cvxclustr package was used; our \clustRviz package was used for \carp and \carpviz. All comparisons were run on a 2013 iMac with a 3.2 GHz Intel i5 processor and 16 GB of 1600 MHz DDR3 memory.

While Theorem \ref{thm:hausdorff} strictly only applies for asymptotically small values of $t$, \carp paths are high-quality approximations of the exact convex clustering solution path, even at moderate values of $t$. We assess accuracy of the \carp paths by considering the normalized relative Hausdorff distance between the primal \carp path ($\bU^{(k)}$) and the exact solution ($\hat{\bU}_{\lambda}$)
\[\widetilde{d_H}(\{\bU^{(k)}\}, \{\hat{\bU}_{\lambda}\}) = \frac{\max\left\{\sup_{\lambda} \inf_k \left\|\bU^{(k)} - \hat{\bU}_{\lambda}\right\|_2, \sup_{k} \inf_\lambda \left\|\bU^{(k)} - \hat{\bU}_{\lambda}\right\|_2\right\}}{n * p * \|\bD\bX\|_{2,\infty}}\] where $\|\cdot\|_{2, \infty}$ is the maximum of the $\ell_2$-norms of the rows of a matrix. (We include the normalization constants in the denominator so that our distance measure does not depend on the size or numerical scale of the data.) In order to calculate the Hausdorff distance, the \carp path with a very small step-size $t$ was used in lieu of the exact solution $\{\hat{\bU}_{\lambda}\}$. As can be seen in Figure \ref{fig:hausdorff}, the \carp path is highly accurate even at moderate values of $t$ and converges quickly to the exact solution as $t \to 1$. \carpviz, which uses an adaptive choice of $t$ to isolate each individual fusion, performs even better than the fixed step-size \carp, attaining a very accurate approximation of the true clustering path.

\begin{figure}[ht]
   \centering
   \includegraphics[width=4in,height=2.2in]{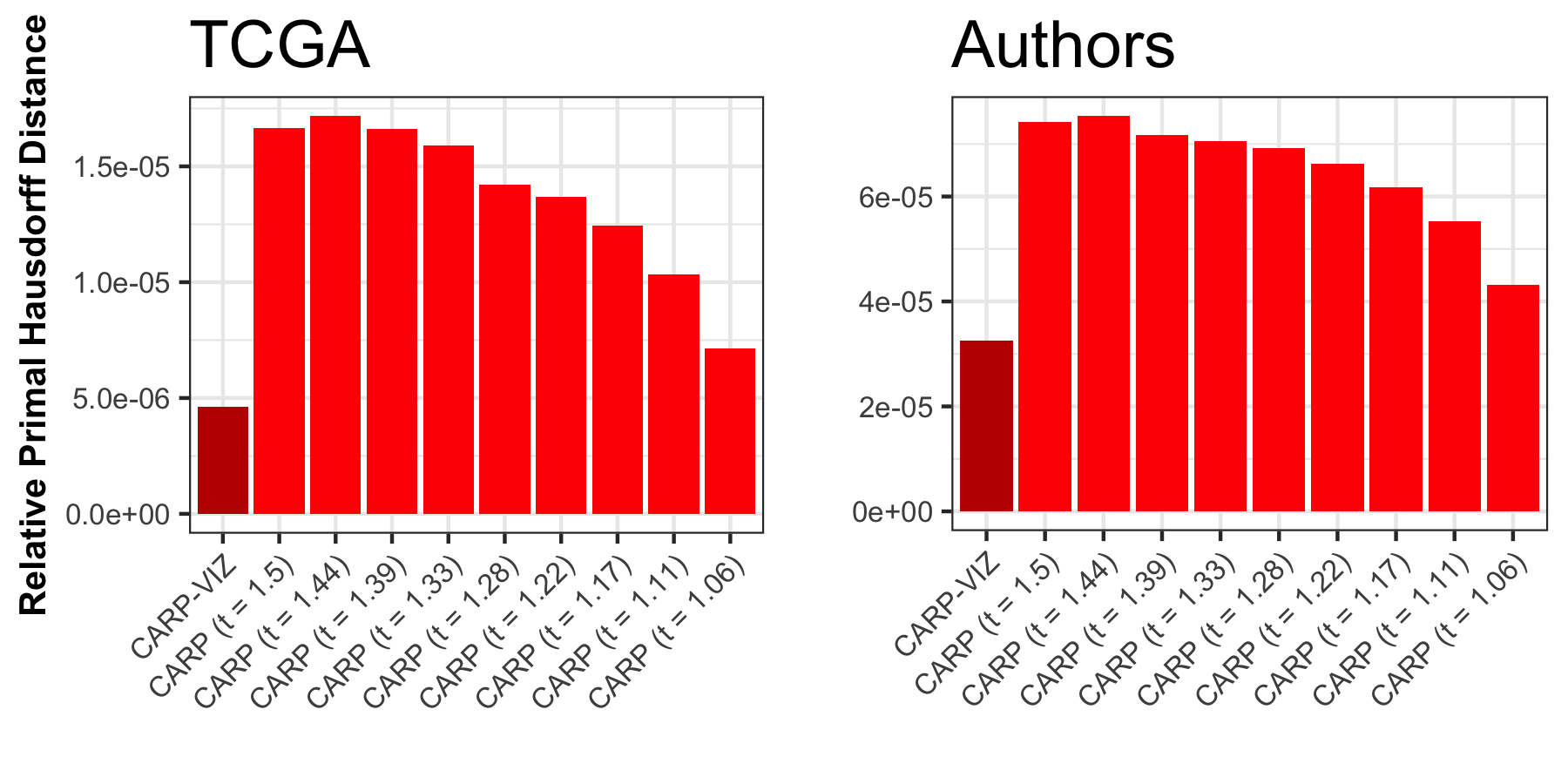}
    \caption{Normalized Relative Primal Hausdorff Distance between the exact convex clustering solution and \carp Paths for various values of $t$. As $t$ decreases, the \carp Paths converge to the exact convex clustering solution path, consistent with Theorem \ref{thm:hausdorff}.}
    \label{fig:hausdorff}
\end{figure}

Even though they are highly accurate, \carp paths are relatively cheap to compute. In Figure \ref{fig:timing}, we compare the computational cost of \carp with the algorithms proposed in \citet{Chi:2015}. As shown in Figure \ref{fig:timing}, \carp significantly outperforms the Accelerated AMA and ADMM algorithms. At large step-sizes ($t = 1.1, 1.05$), \carp terminates in less than a minute and, even at finer grid-sizes ($t = 1.01, 1.005$), \carp takes only a few minutes to run. The \carpviz variant takes significantly longer than standard \carp, though it still outperforms the AMA, taking about an hour for \tcga, rather than the six and a half hours required to solve the AMA at 100 grid points. This improvement in computational performance is even more remarkable when we note that \carp and \carpviz produce a fine grid of solutions by default: on the \tcga data, \carp with $t = 1.01$ produces over 2047 distinct grid points in under five minutes.

\begin{figure}[ht]
   \centering
   \includegraphics[width=4in,height=2in]{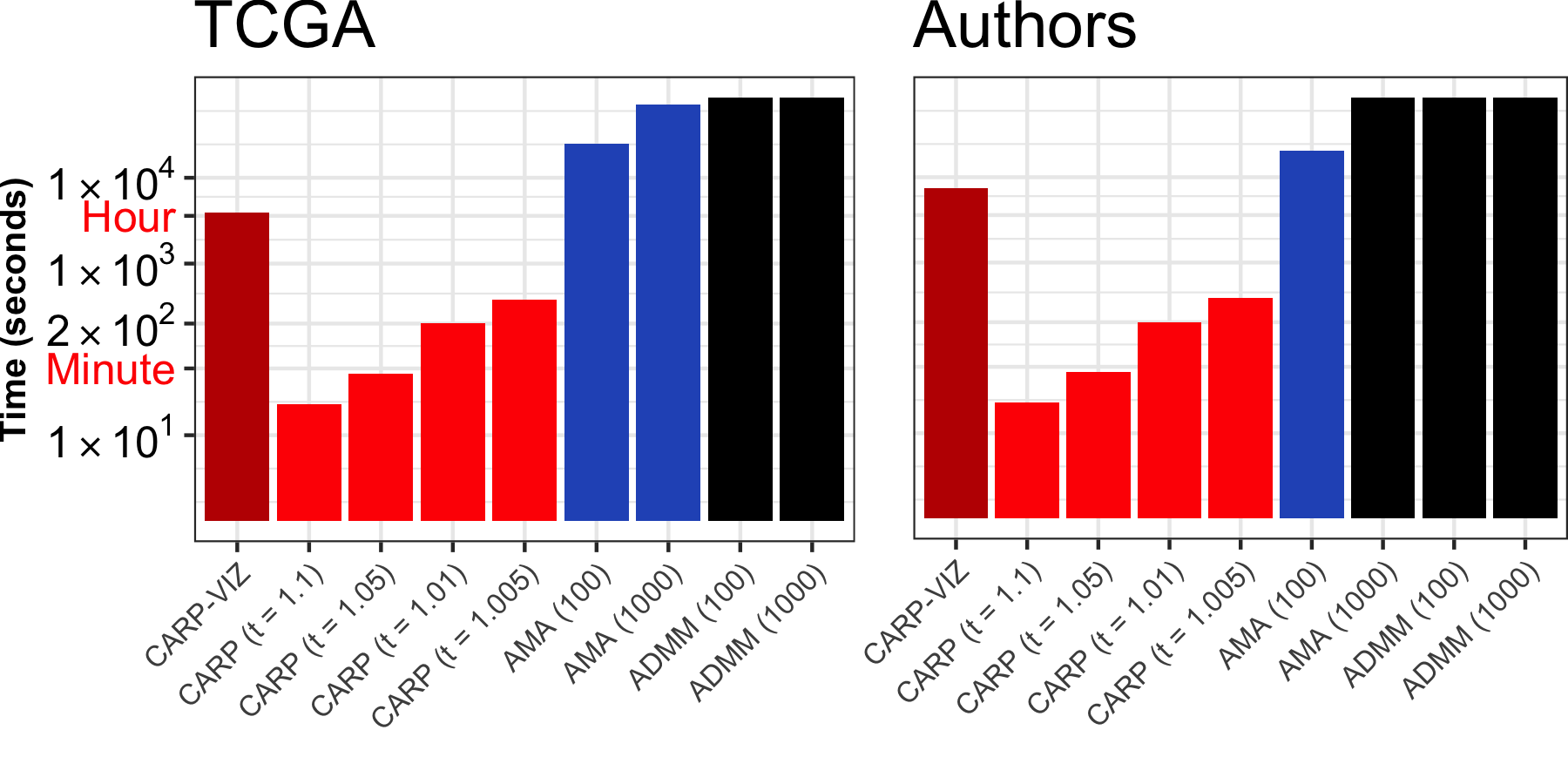}
    \caption{Time required to compute clustering solution path (logarithmic scale). \carp produces high-quality path approximations in a fraction of the time of standard iterative algorithms. Timings in black indicate calculations that took more than 24 hours to complete.}
    \label{fig:timing}
\end{figure}

The fine grid of solutions returned by \carp and \carpviz result in much improved dendrogram recovery, as measured by the fraction of  unique clustering assignments each method returns. Table \ref{tab:fusions} shows recovery results for \carp (at several values of $t$), \carpviz, and standard fixed-grid methods. It is clear that back-tracking employed by \carpviz is necessary for exact dendrogram recovery and that \carpviz should be used if the exact dendrogram recovery is required for visualization. Even with moderate step-sizes ($t = 1.1, 1.05$), however, \carp is still able to estimate the dendrogram far more accurately and more rapidly than standard iterative methods, making it a useful alternative for exploratory work. 

\begin{table}[ht]
  \centering
  \begin{tabular}{c|cc}
  \toprule
  Method & \tcga $(n = 438, p = 353)$ & \authors $(n = 841, p = 69)$ \\
  \midrule
  \carp $(t = 1.1)$ & 5.93\% & 4.40\%  \\
  \carp $(t = 1.05)$ & 11.18\% & 8.09\%\\
  \carp $(t = 1.01)$ & 41.55\% & 22.71\%\\
  \carp $(t = 1.005)$ & 60.27\% & 30.56\%\\
  \midrule 
  \carpviz & \textbf{100\%} & \textbf{100\%}\\
  \midrule
  100-Point Grid & 11.42\% & 5.00\%\\
  1000-Point Grid & 37.44\% & --- \\
  \bottomrule
  \end{tabular}
  \caption{Proportion of dendrogram recovered by \carp, \carpviz, and standard fixed-grid methods. The back-tracking employed by \carpviz is necessary for exact dendrogram recovery, but fixed step-size \carp is still able to recover the dendrogram more accurately than standard fixed-grid approaches.}
  \label{tab:fusions}
\end{table}

On the \tcga data, \carp with $t = 1.05$ is able to recover the dendrogram with the same accuracy in a minute that the AMA attains in six and a half hours. With $t = 1.01$, \carp recovers the dendrogram more accurately in under five minutes than the AMA does in nineteen hours (1000 grid points). \carp achieves these improvements by using its computation efficiently: while a standard optimization algorithm may spend several hundred iterations at a single value of the regularization parameter, \carp only spends a single iteration. By reducing the number of iterations at each grid point, it can take examine a much finer grid in less time. This trade-off is particularly well-suited for our goal of dendrogram recovery, which requires a fine grid of solutions to assess the order of fusions, but does not depend on the exact values of the estimated centroids. While not a primary focus of this paper, the high-quality dendrogram estimation allowed by \carp translates into improved statistical performance as well. We compare the statistical performance of \carp with other clustering methods in Section \ref{app:additional_comparisons} of the Supplementary Materials.

\section{Visualization of \carp Results} \label{sec:visualization}

In this section, we discuss visualization of convex clustering results, emphasizing the role that \carp can play in exploratory data analysis. The visualizations illustrated in this section can all be produced using our \clustRviz \texttt{R} package. Throughout this section, we will use the \presidents data set, ($n = 44, p = 75$) which contains log-transformed word counts of the 75 most variable words taken from the aggregated major speeches (primarily Inaugural and State of the Union Addresses) of the 44 U.S. presidents through mid-2018. (We consolidate the two non-consecutive terms of Grover Cleveland.)

We begin by considering a dendrogram representation of this data, as shown in Figure \ref{fig:pres_dendro}. For each dendrogram, we have colored the observations by historical period: Founding Fathers, pre-Civil War, pre-World War II, and modern. Given the evolution of the English language and the changing political concerns of these periods, we would expect clustering methods to group the presidents according to historical period. With three exceptions, \carp clearly identifies the four historical periods, with the modern period being particularly well-separated. The performance of hierarchical clustering is highly sensitive to the choice of linkage: Ward's linkage \citep{Ward:1963} does almost as well as \carp, but does not clearly separate the pre-Civil War and pre-World War II periods. Single linkage correctly identifies the modern period, but otherwise does not separate the pre-modern presidents. Complete linkage performs the worst, clustering Donald Trump with the Founding Fathers,  Garfield, and Harrison instead of with other modern presidents. We note that Harrison is consistently misclustered by all methods considered: we believe this is due to the fact he died thirty-one days into his first term and did not leave a lengthy textual record. 

\begin{figure}
  \centering
  \begin{subfigure}[b]{.47\linewidth}
  \includegraphics[width=\linewidth]{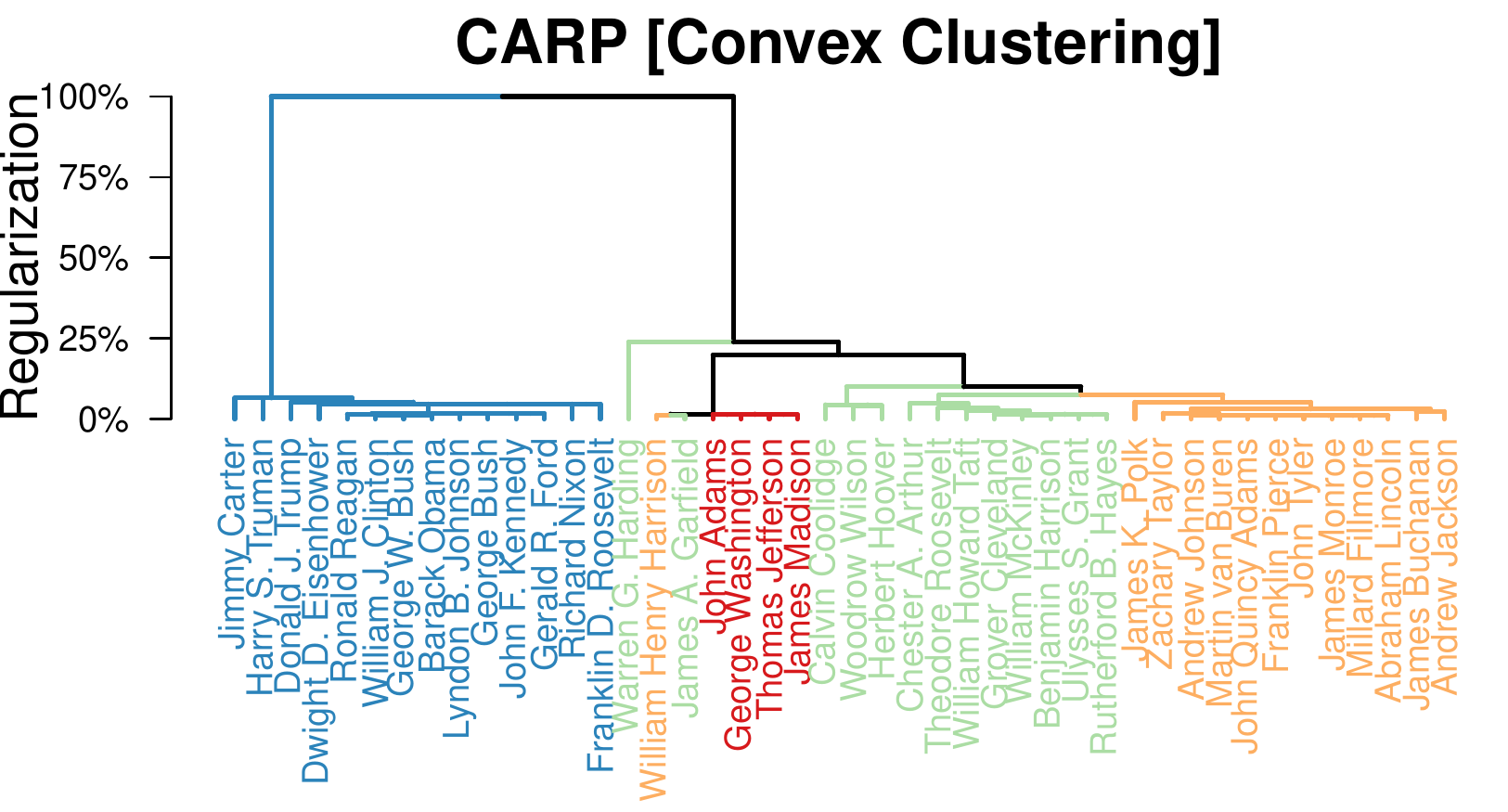}
  \end{subfigure}
  \begin{subfigure}[b]{.47\linewidth}
  \includegraphics[width=\linewidth]{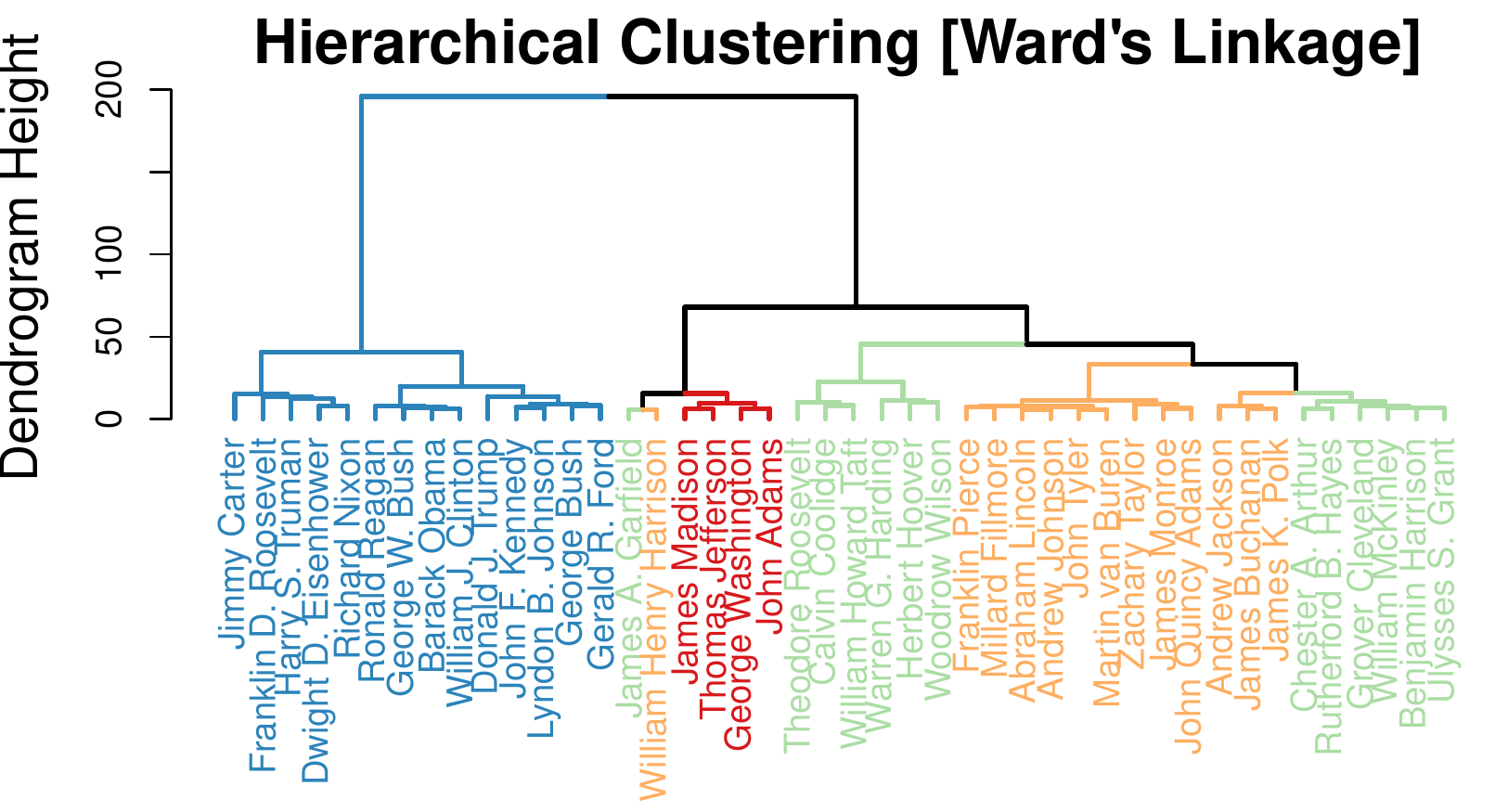}
  \end{subfigure}

  \begin{subfigure}[b]{.47\linewidth}
  \includegraphics[width=\linewidth]{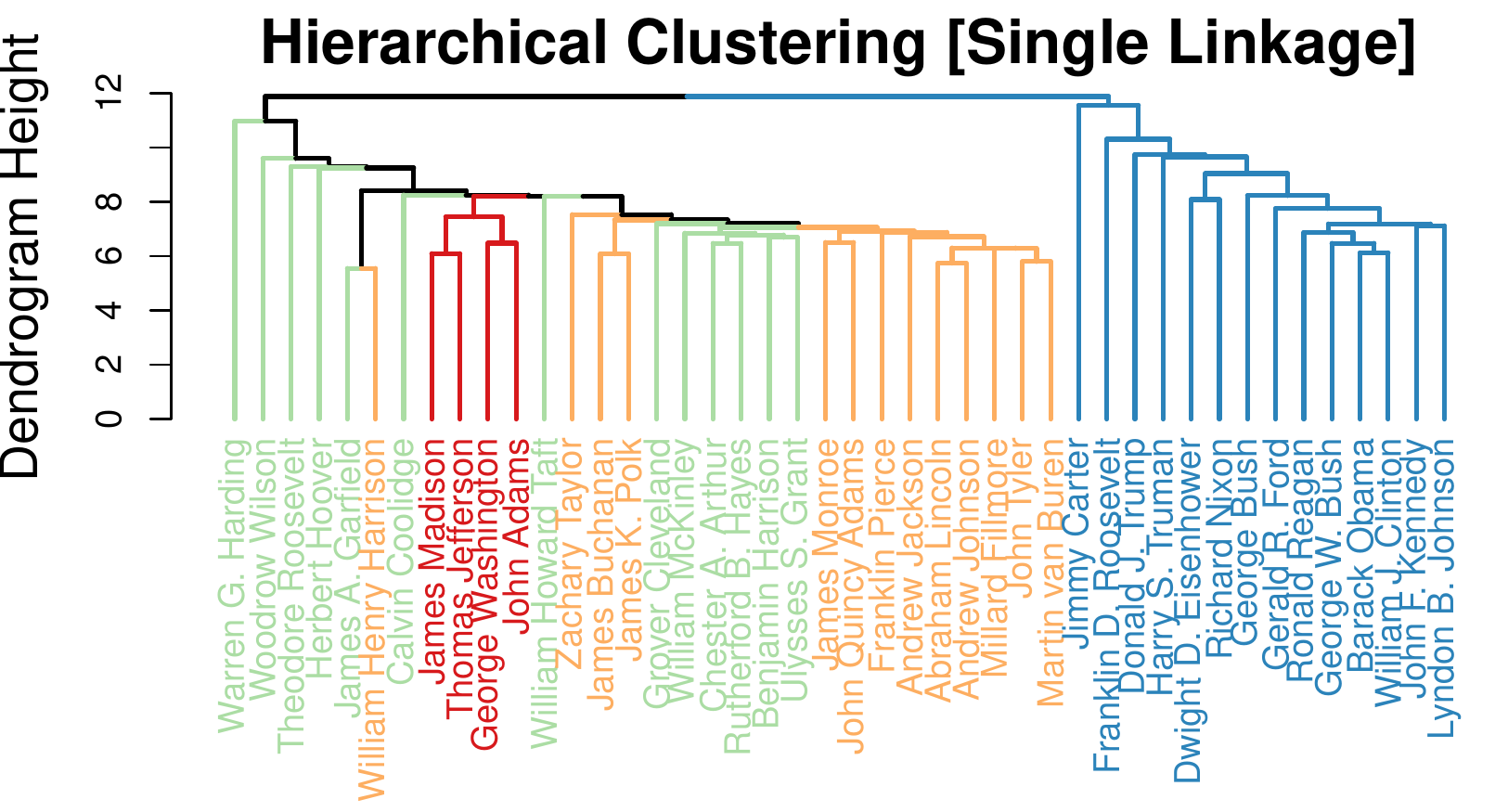}
  \end{subfigure}
  \begin{subfigure}[b]{.47\linewidth}
  \includegraphics[width=\linewidth]{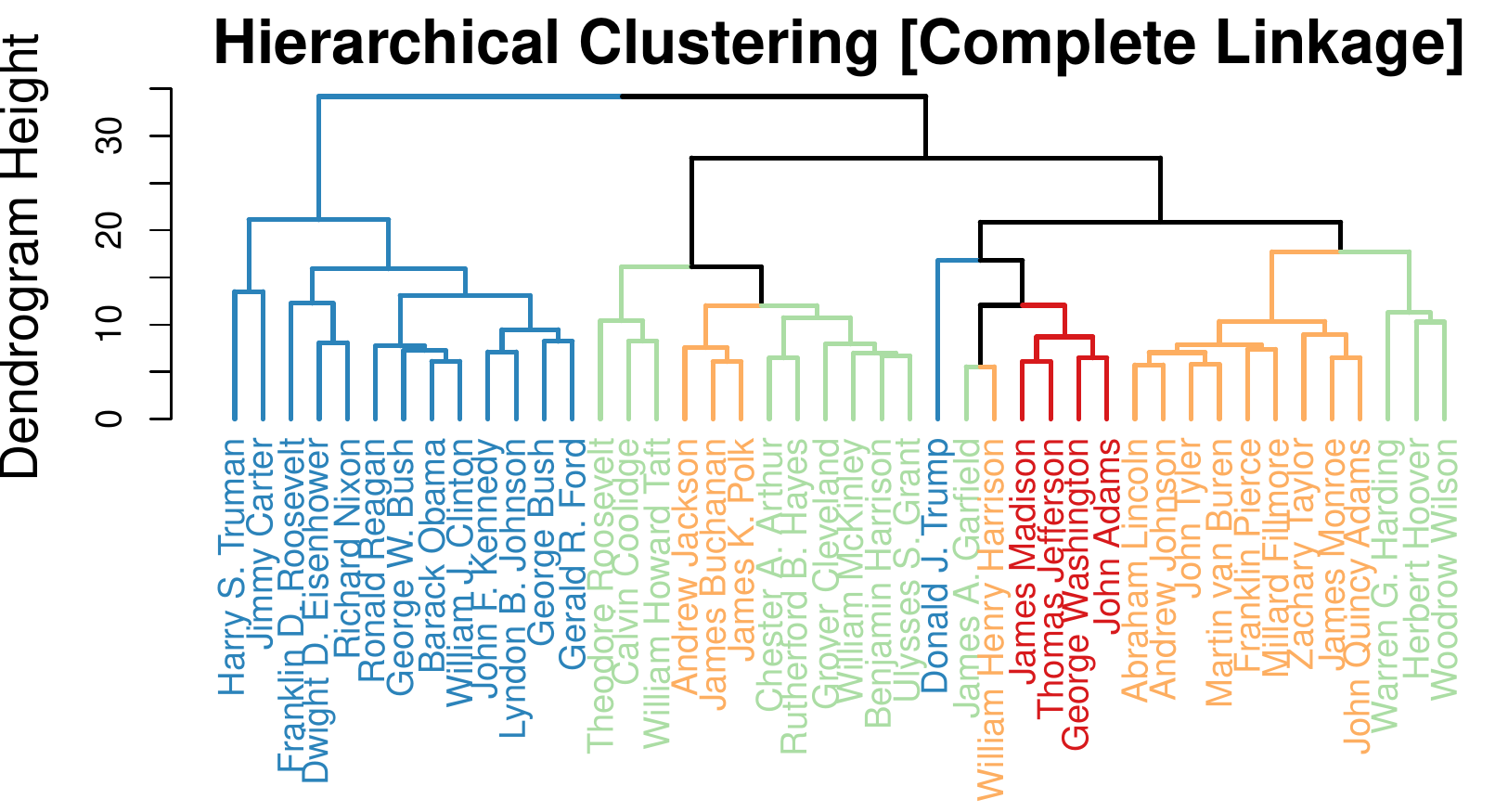}
  \end{subfigure}
  \caption{ Comparison of \carp (top left) and Euclidean-distance Hierarchical Clustering dendrograms on the \presidents data. The presidents are colored according to historical period: Founding Fathers (red, 1789-1817, Washington to  Madison, $n = 4$); pre-Civil War (orange, 1817-1869, Monroe to Johnson, $n = 13$); pre-World War II (green, 1869-1933, Grant to Hoover, $n=13$); and modern (blue, 1933-present, F.D.~Roosevelt to Trump, $n = 14$). We consider Johnson to be a pre-Civil War president as he ascended to the presidency following the assassination of Lincoln rather than being directly elected.}
  \label{fig:pres_dendro}
\end{figure}

Beyond allowing accurate dendrogram construction, the \carp paths are themselves interesting to visualize. By plotting the path traced by the \carp iterates $\bU^{(k)}$, we can observe exactly how \carp forms clusters from a given data set. For dimension reduction, we typically plot the projection of $\bU^{(k)}$ onto the principal components of $\bX$, though \clustRviz allows visualization of the raw features as well. Unlike the \carp-dendrogram, the path plot allows examination of the structure of the estimated clusters and not just their membership. By displaying the original observations on the path plot, we also enable comparison of the estimated centroids with the original data. Modern web technologies allow us to display these path plots dynamically, forming a movie with each \carp iterate as a separate frame. The fine solution grid returned by \carp is especially relevant here, enabling us to construct movies in which the observations move smoothly. We have found that the smoothness of the movie is a useful heuristic to assess whether a small enough step-size $t$ was used: if the paths ``jump'' conspicuously from one frame to the next, one should consider re-running \carp with a smaller step-size.

Figure \ref{fig:path_viz} shows three frames of such a movie: in each frame, on the left side, we see the Founding Fathers cluster being merged to the other pre-modern presidents, while on the right, we see a clear cluster of modern presidents. A closer examination of the central frame reveals additional information not visible in the \carp-dendrogram: Harding, the last president to join the pre-modern cluster, is an outlier lying between two clusters rather than far to one side.

\begin{figure}
  \centering
  \includegraphics[height=2.5in]{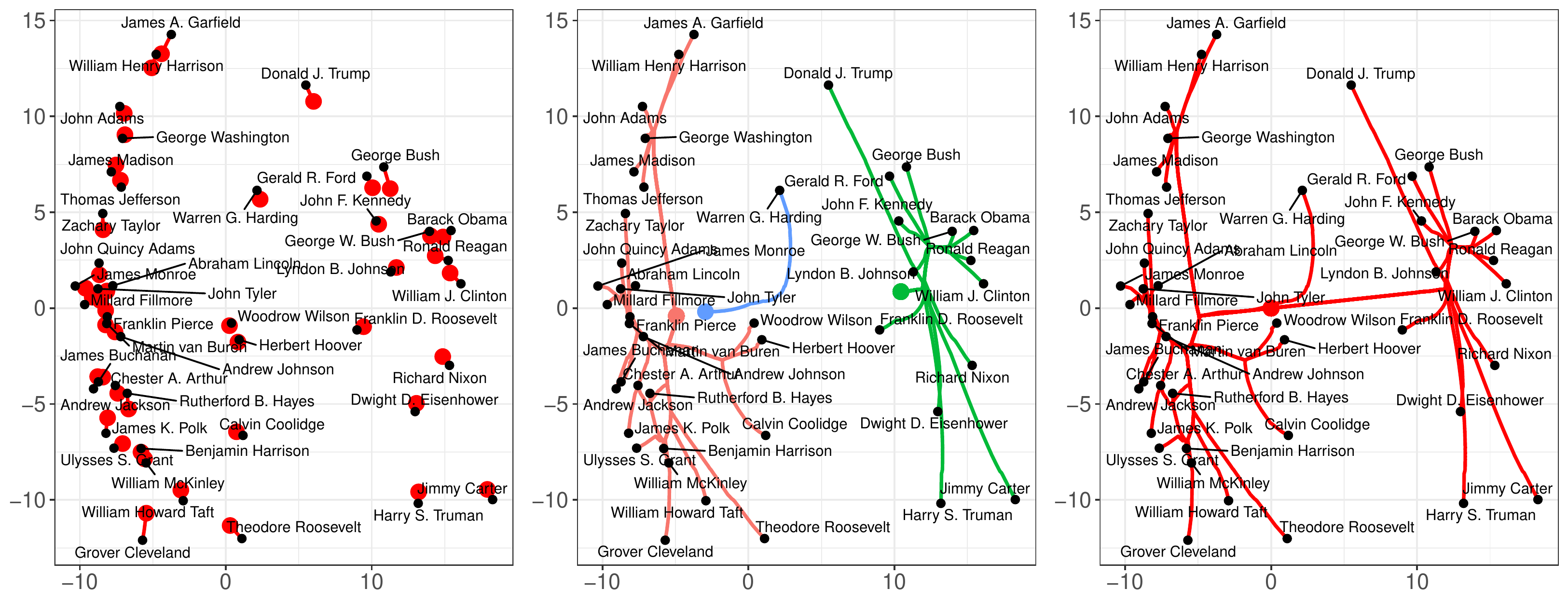}
  \caption{Direct visualization of the solution paths produced by \carp on the \presidents data, corresponding to unclustered (left), partially clustered (middle), and fully clustered (right) solutions. In each panel, the clusters of pre-modern and modern presidents are clearly visible, as is the outlier status of Harding.}
  \label{fig:path_viz}
\end{figure}

Because both the dendrogram and path visualizations are indexed by the regularization level, $\gamma^{(k)}$, it is possible to display them in a ``linked'' fashion, highlighting clusterings on the dendrogram as they are fused in the path plot. Particularly when rendered dynamically, this combination gives the best of both visualizations, combining the global structure visible in the dendrogram with the structural information visible in the path plot. An example of this ``linked'' visualization is shown in Figure \ref{fig:linked_viz}.

\begin{figure}
  \centering
  \includegraphics[width=6in]{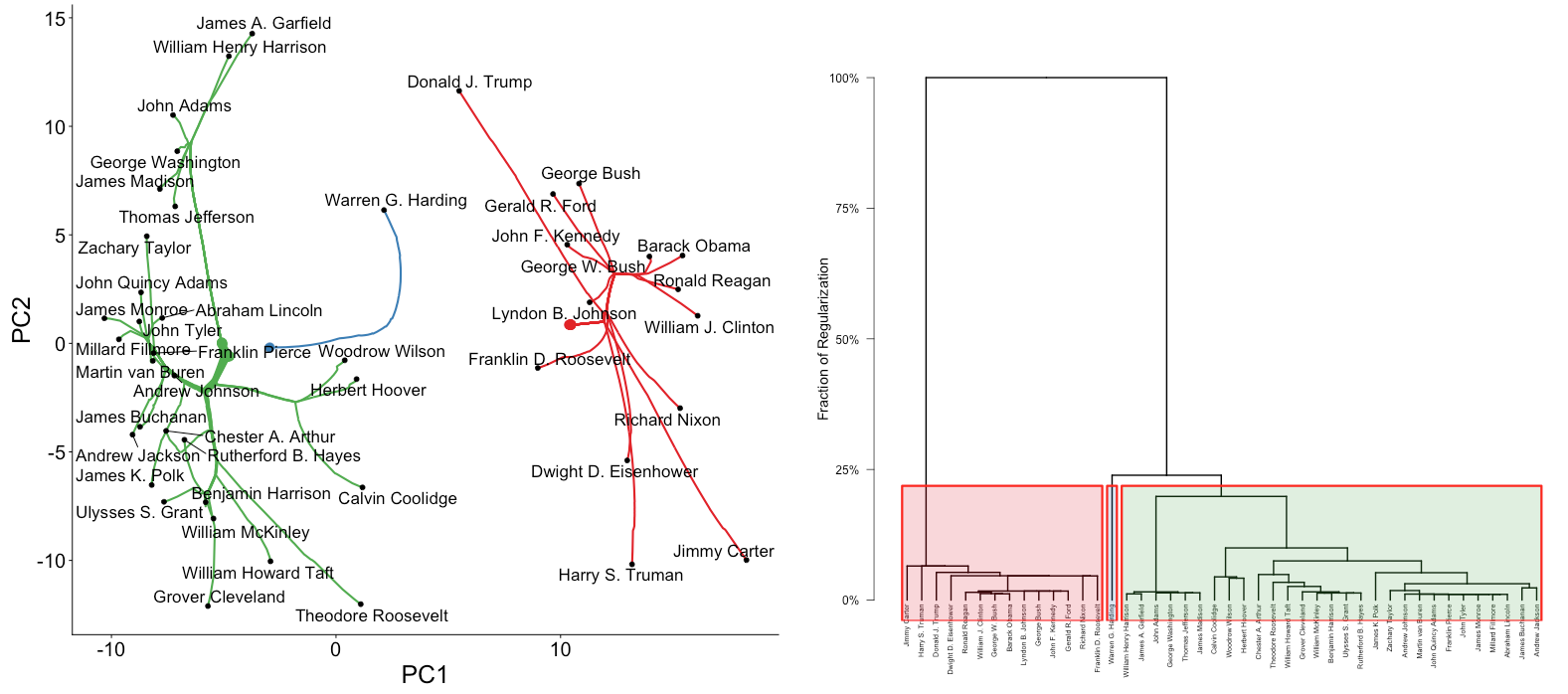}
  \caption{ Linked simultaneous visualization of the \carp dendrogram and path plots. As clusters are formed in the path plot (left), they are highlighted on the dendrogram (right). The clusters of pre-modern and modern presidents are clearly visible, as is the outlier status of Warren G. Harding.}
  \label{fig:linked_viz}
\end{figure}

\section{Discussion} \label{sec:conclusion}
We have introduced Algorithmic Regularization, an iterative one-step approximation scheme which can be used to efficiently obtain high-quality approximations of regularization paths. Algorithmic regularization focuses on accurate reconstruction of an entire regularization path and is particularly useful for obtaining path-wise information, such as a dendrogram or the order in which variables leave the active set in sparse regression. We have focused on the application of the ADMM to convex clustering, but the technique of iterative one step-approximations can be applied to any problem which lacks an efficient algorithm. We believe that algorithmic regularization can be fruitfully applied to a broader range of statistical learning problems and expect that similar computational improvements can be achieved for other difficult optimization problems. 

Theorem \ref{thm:hausdorff} is a novel \emph{global} convergence result, guaranteeing high-quality approximation at each point of the exact solution path. Despite this, there are still many open questions in the analysis of algorithmic regularization. We are particularly interested in determining optimal convergence rates for global path-wise approximation problems and showing that algorithmic regularization can attain those rates. Our proof of Theorem \ref{thm:hausdorff} depends on the strong convexity of the convex clustering problem to ensure linear convergence of the underlying ADMM steps. It would be interesting to explore the interplay between algorithmic regularization and optimization schemes which are linearly convergent without strong convexity, as this may extend the applicability of algorithmic regularization even further. 


Using algorithmic regularization, we have introduced the \carp and \cbass algorithms for convex clustering and bi-clustering. On moderately sized problems, \carp and \cbass reduce the time necessary to obtain high-quality regularization paths from several hours to only a few minutes, typically attaining over one-hundred-fold improvements over existing algorithms. Because \carp and \cbass return solutions at a fine grid of the regularization parameter, they can be used to construct accurate convex (bi-)clustering dendrograms, particularly if the back-tracking \carpviz and \cbassviz variants are employed. Additionally, the fine-grained \carp and \cbass solution paths allow for path-wise dynamic visualizations, allowing the analyst to observe exactly how the estimated clusters are formed and structured. 

We anticipate that the computational and visualization techniques proposed in this paper will make convex clustering and bi-clustering an attractive option for applied data analysis. Both \carp and \cbass, as well as the proposed visualizations, are implemented in our \clustRviz software, available at \url{https://github.com/DataSlingers/clustRviz}.

\section*{Acknowledgements}

We thank Eric Chi for helpful discussions about both convex clustering and algorithmic regularization. MW acknowledges support from the NSF Graduate Research Fellowship Program under grant number 1842494. GA acknowledges support from NSF DMS-1554821, NSF NeuroNex-1707400, and NSF DMS-1264058.  JN acknowledges support from NSF DMS-124058, NSF DMS-1554821, and the National Institutes of Health National Cancer Institute T32 Training program in Biostatistics for Cancer Research, Grant Number: CA096520.

MW and JN jointly developed the \clustRviz software. MW is responsible for the content and proof of Theorem \ref{thm:hausdorff} and prepared the manuscript. JN performed initial experiments and developed the back-tracking and post-processing schemes. GA supervised the research and edited the final manuscript.

\section*{Supplementary Materials}

Supplementary materials published alongside the online version of this article contain a more detailed derivation of Algorithms \ref{alg:short_admm} and \ref{alg:short_carp}, a detailed proof of Theorem \ref{thm:hausdorff}, a discussion of our \cbass algorithm for convex bi-clustering, numerical comparisons of \carp to existing clustering approaches on real and synthetic data sets, details about the back-tracking, post-processing, and dendrogram construction strategies used in our \carpviz algorithm and \clustRviz software, and a discussion of additional related work. 

\printbibliography[heading=subbibliography]
\end{refsection}
\clearpage
\appendix

\renewcommand\thefigure{A\arabic{figure}}    
\setcounter{figure}{0}    
\renewcommand\thealgorithm{A\arabic{algorithm}}
\setcounter{algorithm}{0}
\renewcommand\theequation{A\arabic{equation}}
\setcounter{equation}{0}

\begin{center} \Huge \bf Supplementary Materials \end{center}
\begin{refsection}
\section{Operator Splitting Methods for Convex Clustering} \label{app:derivations}
\subsection{ADMM for Convex Clustering}
In this section, we derive and give the full form of the ADMM presented in Algorithm
\ref{alg:short_admm} for the convex clustering problem
\eqref{eqn:cclust}. We begin by noting that, in typical applications,
most of the weights $w_{ij}$ are zero and hence do not enter into the
optimization problem. We can omit the $\binom{n}{2}$-term sum and
instead write the convex clustering problem \eqref{eqn:cclust} as

\begin{equation*}
\argmin_{\bU \in \R^{n \times p}} \frac{1}{2} \|\bX - \bU\|_F^2 + \lambda \left(\sum_{((i, j), w) \in \mathcal{E}} w_{ij}  \|\bU_{i\cdot} - \bU_{j\cdot}\|_q\right)
\end{equation*}

where $\mathcal{E}$ is the set of directed edges with non-zero weights
$w$ connecting $i$ to $j$.

In this form, it is clear that the convex clustering problem is
amenable to operator splitting methods; in particular,
\citet{Chi:2015} showed that the Alternating Direction Method of
Multipliers (ADMM) \citep{Glowinski:1975,Gabay:1976,Boyd:2011} works particularly well for this
problem. Algorithm \ref{alg:admm} differs from the ADMM derived in
\citet{Chi:2015} in two significant ways: firstly, we only consider
edges with non-zero weights, thereby greatly reducing storage
requirements of the algorithm; and secondly, we implement the
algorithm in ``matrix-form'' rather than in a fully vectorized
form. These differences make the resulting algorithm both easier to
derive and to read, as well as more able to take advantage of highly
optimized numerical linear algebra libraries.

We note that while we are solving a matrix-valued problem, it is not a
semi-definite program, and the additional complexity typically
associated with matrix-valued optimization does not apply
here. Because we are optimizing over the space of all matrices of a
certain size, the underlying problem is essentially Euclidean in
geometry and standard (vector-valued) optimization techniques can be
applied, replacing the (squared) Euclidean norm with the (squared)
Frobenius norm and the standard Euclidean inner product with the
Frobenius inner product as necessary.

The derivation of the ADMM for convex clustering \eqref{eqn:cclust} is relatively
straight-forward. We begin by introducing an auxiliary variable $\bV$
containing the pairwise differences between connected rows of
$\bU$. The problem then becomes
\begin{equation*}
\argmin_{\bU \in \R^{n \times p}} \frac{1}{2} \|\bU - \bX\|_F^2 + \lambda \underbrace{\left(\sum_{(e_l, w_l) \in \mathcal{E}} w_l  \|\bV_{l\cdot}\|_q\right)}_{P(\bV; \bw, q)} \quad \text{ subject to } \bD\bU - \bV = 0.
\end{equation*}

From here, we use the scaled form of the ADMM as given by a matrix version of Equations (3.5) to (3.7) of \citet{Boyd:2011}:
\begin{align*}
\bU^{(k+1)} &= \argmin_{\bU \in \R^{n \times p}} \frac{1}{2}\|\bU - \bX\|_F^2 + \frac{\rho}{2}\left\|\bD\bU - \bV^{(k)} + \bZ^{(k)}\right\|_F^2 \\
\bV^{(k+1)} &= \argmin_{\bV \in \R^{|\mathcal{E}| \times p}} \lambda P(\bV; \bw, q) + \frac{\rho}{2}\left\|\bD\bU^{(k + 1)} - \bV + \bZ^{(k)}\right\|_F^2 \\
\bZ^{(k+1)} &= \bZ^{(k)} + \bD\bU^{(k+1)} - \bV^{(k + 1)}
\end{align*}
where the dual variable is denoted by $\bZ$. The analytical solution
to the first subproblem is given by:
\[\bU^{(k+1)} = (\bI + \rho \bD^T\bD)^{-1}\left(\bX + \rho \bD^T(\bV^{(k)} - \bZ^{(k)})\right)\]
This update is the most expensive step in the ADMM, though it can be
significantly sped up by pre-calculating caching the Cholesky
factorization of $\bI + \rho \bD^T\bD$ and using it at each
$\bU$-update:
\[\bU^{(k+1)} = \bL^{-T}\bL^{-1}\left(\bX + \rho \bD^T(\bV^{(k)} - \bZ^{(k)})\right) \quad \text{ where } \quad \bL\bL^T = \bI + \rho \bD^T\bD\]

To solve the second subproblem, we note that it can be written as a
proximal operator: \small
\begin{align*} \argmin_{\bV \in \R^{|\mathcal{E}| \times n}} \lambda P(\bV; \bw, q) + \frac{\rho}{2}\left\|\bD\bU^{(k + 1)} - \bV + \bZ^{(k)}\right\|_F^2 &= \argmin_{\bV \in \R^{|\mathcal{E}| \times n}} \frac{\lambda}{\rho} P(\bV; \bw, q) + \frac{\rho}{2}\left\|\bV - \left(\bD\bU^{(k + 1)} + \bZ^{(k)}\right)\right\|_F^2 \\ &= \prox_{\frac{\lambda}{\rho}P(\cdot; \bw, q)}\left(\bD\bU^{(k + 1)} + \bZ^{(k)}\right)\end{align*} \normalsize
We note that, due to the row-wise structure of $P$, this proximal
operator can be computed separately across the rows of its
argument. In the cases $q = 1$ or $q = 2$, the proximal operator
reduces to element-wise ($q = 1$) or group ($q = 2$) soft-thresholding
row $l$ at the level $w_l \lambda / \rho$. If $q = \infty$, Moreau's
decomposition \citep{Moreau:1962} can be combined with the the
efficient projection onto the $\ell_1$ ball developed by
\citet{Duchi:2008} to evaluate the prox in $\mathcal{O}(p \log p)$
steps. For other values of $q$, an iterative algorithm must be used.

Several stopping criteria for the ADMM have been proposed in the
literature. We have found a simple stopping rule based on the change
in $\bU$ being small sufficient in all cases. While some authors
report speed-ups due to varying the ADMM relaxation parameter $\rho$,
we have found that fixing $\rho$ and re-using the Cholesky factor
$\bL$ to be more efficient. Combining these steps, we obtain Algorithm
\ref{alg:admm}.

\begin{algorithm}[ht]
\caption{Warm-Started ADMM for the Convex Clustering Problem \eqref{eqn:cclust}}
\label{alg:admm}
\begin{enumerate}
\item Input:
\begin{itemize}
\item Data Matrix: $\bX \in \R^{n \times p}$
\item Weighted Directed Edge Set: $\mathcal{E} = \{(e_l, w_l)\}$
\item Relaxation Parameter: $ \rho \in \R_{> 0}$
\item Initial Regularization Parameter $\epsilon$ and Multiplicative Step-Size $t$
\end{itemize}
\item Precompute:
\begin{itemize}
\item Difference Matrix: $\bD \in \R^{|\mathcal{E}| \times n}$ where $D_{ij}$ is $1$ if  edge $i$ begins at node $j$, $-1$ if edge $i$ ends at node $j$, and $0$ otherwise
\item Cholesky Factor: $\bL = \textsf{chol}(\bI + \rho \bD^T\bD) \in \R^{n \times n}$
\end{itemize}
\item Initialize:
\begin{itemize}
\item $\bU^{(0)} = \bX$
\item $\bV^{(0)} = \bZ^{(0)}  = \bD\bX$
\item $l = 0$, $\lambda_0 = \epsilon$, $k = 0$, 
\end{itemize}
\item Repeat until $\|\bV^{(k)}\| = 0$:
\begin{itemize}
\item Repeat until convergence:
\begin{enumerate}
\item[(i)] $\bU^{(k + 1)} = \bL^{-T}\bL^{-1}\left(\bX + \rho \bD^T(\bV^{(k)} - \bZ^{(k)}\right)$
\item[(ii)] $\bV^{(k+1)} = \prox_{\lambda_l / \rho\, P(\cdot; \bw, q)}\left(\bD \bU^{(k + 1)} + \bZ^{(k)}\right)$
\item[(iii)] $\bZ^{(k + 1)} = \bZ^{(k)} + \bD \bU^{(k + 1)} - \bV^{(k+1)}$
\item[(iv)] $k := k + 1$
\end{enumerate}
\item Store $\hat{\bU}_{\lambda_l} = \bU^{(k)}$
\item Update Regularization Parameter $l := l + 1$; $\lambda_l := \lambda_{l - 1} * t$
\end{itemize}
\item Return $\left\{\hat{\bU}_{\lambda_i}\right\}_{i = 0}^{l - 1}$ as the regularization path
\end{enumerate}
\end{algorithm}

\subsection{Algorithmic Regularization for Convex Clustering}
In this section, we give a the full version of the \carp algorithm presented in Algorithm \ref{alg:short_carp}. 
\carp can be obtained from the standard ADMM for convex clustering (Algorithm \ref{alg:admm}) by replacing the inner ADMM loop with a single iteration. This modification gives \carp (Algorithm \ref{alg:carp}). As with Algorithm \ref{alg:admm}, we prefer to use a matrix formulation, instead of a fully vectorized formulation, to simplify the implementation and to take advantage of high-performance numerical linear algebra libraries.

\begin{algorithm}[ht]
\caption{\carp: Convex Clustering via Algorithmic Regularization Paths}
\label{alg:carp}
\begin{enumerate}
\item Input:
\begin{itemize}
\item Data Matrix: $\bX \in \R^{n \times p}$
\item Weighted Directed Edge Set: $\mathcal{E} = \{(e_l, w_l)\}$
\item Relaxation Parameter: $ \rho \in \R_{> 0}$
\item Initial Regularization Parameter $\epsilon$ and Multiplicative Step-Size $t$
\end{itemize}
\item Precompute:
\begin{itemize}
\item Difference Matrix: $\bD \in \R^{|\mathcal{E}| \times n}$ where $D_{ij}$ is $1$ if  edge $i$ begins at node $j$, $-1$ if edge $i$ ends at node $j$, and $0$ otherwise
\item Cholesky Factor: $\bL = \textsf{chol}(\bI + \rho \bD^T\bD) \in \R^{n \times n}$
\end{itemize}
\item Initialize:
\begin{itemize}
\item $\bU^{(0)} = \bX$
\item $\bV^{(0)} = \bZ^{(0)}  = \bD\bX$
\item $k = 0$, $\gamma^{(0)} = \epsilon$ 
\end{itemize}
\item Repeat until $\|\bV^{(k)}\| = 0$:
\begin{enumerate}
\item[(i)] $\bU^{(k + 1)} = \bL^{-T}\bL^{-1}\left(\bX + \rho \bD^T(\bV^{(k)} - \bZ^{(k)}\right)$
\item[(ii)] $\bV^{(k+1)} = \prox_{\gamma^{(k)} / \rho\, P(\cdot; \bw, q)}\left(\bD \bU^{(k + 1)} + \bZ^{(k)}\right)$
\item[(iii)] $\bZ^{(k + 1)} = \bZ^{(k)} + \bD \bU^{(k + 1)} - \bV^{(k+1)}$
\item[(iv)] $k := k + 1$, $\gamma^{(k)} = \gamma^{(k - 1)} * t$
\end{enumerate}
\item Return $\left\{\bU^{(k)}\right\}_{i = 0}^k$ as the \carp algorithmic regularization path
\end{enumerate}
\end{algorithm}

\clearpage

\subsection{AMA for Convex Clustering}
In addition to the AMA, \citet{Chi:2015} also show that the convex clustering problem \eqref{eqn:cclust} can be efficiently solved using the Alternating Minimization Algorithm (AMA) of \citet{Tseng:1991}. In our notation, the AMA becomes
\begin{align*}
\bU^{(k+1)} &= \argmin_{\bU \in \R^{n \times p}} \frac{1}{2}\|\bU - \bX\|_F^2 + \langle \bZ^{(k)}, \bD\bU - \bV^{(k)} \rangle\\
\bV^{(k+1)} &= \argmin_{\bV \in \R^{|\mathcal{E}| \times n}} \lambda P(\bV; \bw, q) + \langle \bZ^{(k)}, \bD\bU^{(k+1)} - \bV^{(k)}\rangle + \frac{\rho}{2}\left\|\bD\bU^{(k+1)} - \bV\right\|_F^2 \\
\bZ^{(k+1)} &= \bZ^{(k)} + \rho(\bD\bU^{(k+1)} - \bV^{(k + 1)})
\end{align*}
(Note that we use the unscaled updates for $\bV, \bZ$ here as the AMA uses different values of the relaxation parameter in the $\bU$ and $\bV$ updates. In particular, this means that the dual variables $\bZ$ from the ADMM are not the same as those from the AMA.) Simplifying these updates as before, the AMA becomes:
\begin{align*}
\bU^{(k+1)} &= \bX - \bD^T\bZ^{(k)}\\
\bV^{(k+1)} &=  \prox_{\frac{\lambda}{\rho}P(\cdot; \bw, q)}\left(\bD\bU^{(k + 1)} + \bZ^{(k)}/\rho\right)\\
\bZ^{(k+1)} &= \bZ^{(k)} + \rho(\bD\bU^{(k+1)} - \bV^{(k + 1)})
\end{align*}
\citet{Chi:2015} note that a clever application of Moreau's decomposition \citep{Moreau:1962} allows the $\bV$-updates to be elided and for the AMA to be simplified into a two-step scheme. The $\bV^{(k)}$ iterates are key to dendrogram reconstruction, however, so such a simplification could not be used in an AMA-based \carp variant. 

In our experiments, this elision is necessary for the AMA to outperform the ADMM and so, without it, the AMA does not appear to be a promising basis for an algorithmic regularization scheme. In general, the ADMM appears to converge more rapidly \emph{per iteration} than the AMA, while the simplified AMA has much faster updates, allowing better overall computational performance in a standard optimization scheme. Since \carp performs only a single iteration per regularization level, however, the faster per iteration convergence of the ADMM is more important to us than the faster calculation of the AMA.

Finally, \citet{Chi:2015} also discuss the use of accelerated variants of the ADMM and AMA \citep{Goldstein:2014} to improve convergence. Because \carp uses only a single iteration for each regularization level, it is not amenable to acceleration. 


\section{Proof of Theorem \ref{thm:hausdorff}}
\label{app:proof}

In this section we prove Theorem \ref{thm:hausdorff} on the Hausdorff convergence of \carp to the convex clustering regularization path. We begin with 3 technical lemmas which may be of independent interest: Lemma \ref{lem:q_linear} provides a convergence rate for the optimization step embedded within a \carp iteration; Lemma \ref{lem:lip_paths} establishes a form of Lipschitz continuity for convex clustering regularization paths; Lemma \ref{lem:err_bound} provides a global bound for the approximation error induced by \carp at any iteration. In one step, our results are stated and proven for \carp with an $\ell_2$-fusion penalty, but can be easily extended to other $\ell_q$-fusion penalties.

\vspace{0.05in}
\begin{lem}[Q-Linear Error Decrease] \label{lem:q_linear}
At each iteration $k$, the \carp approximation error decreases by a
factor $c < 1$ not depending on $t$ or $\epsilon$. That is,
\[\|\bU^{(k)} - \hat{\bU}_{\gamma^{(k)}}\| + \|\bZ^{(k)} - \hat{\bZ}_{\gamma^{(k)}}\| < c \left[\|\bU^{(k-1)} - \hat{\bU}_{\gamma^{(k)}}\| + \|\bZ^{(k-1)} - \hat{\bZ}_{\gamma^{(k)}}\|\right]\]
for some $c$ strictly less than 1.
\end{lem}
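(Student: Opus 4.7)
The plan is to recognize that one CARP iteration is precisely one ADMM step applied to problem \eqref{eqn:cclust} at regularization level $\gamma^{(k)}$, and to inherit its linear convergence rate from standard ADMM theory. Under this identification, $(\hat{\bU}_{\gamma^{(k)}}, \hat{\bZ}_{\gamma^{(k)}})$ is the fixed point of the ADMM operator, and the desired inequality becomes the standard one-step contraction bound for a strongly convex ADMM.

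First I would record the structural properties that drive the argument: the loss $\tfrac{1}{2}\|\bX - \bU\|_F^2$ is $1$-strongly convex with $1$-Lipschitz gradient, the difference operator $\bD$ has data-independent spectral range, and $\gamma P(\cdot; \bw, q)$ is proper, closed, and convex for every $\gamma > 0$. Strong convexity of the smooth part is exactly the hypothesis under which the ADMM (equivalently, Douglas--Rachford on the dual) is known to converge linearly, and the standard analysis delivers a contraction factor $\tilde c < 1$ in some positive-definite weighted norm $\|\cdot\|_M$:
\begin{equation*}
\bigl\| (\bU^{(k)}, \bZ^{(k)}) - (\hat{\bU}_{\gamma^{(k)}}, \hat{\bZ}_{\gamma^{(k)}}) \bigr\|_M \leq \tilde c \, \bigl\| (\bU^{(k-1)}, \bZ^{(k-1)}) - (\hat{\bU}_{\gamma^{(k)}}, \hat{\bZ}_{\gamma^{(k)}}) \bigr\|_M.
\end{equation*}
The crucial observation for this lemma is that $\tilde c$ supplied by such a theorem depends only on the strong-convexity modulus and gradient Lipschitz constant of the loss, the spectrum of $\bD$, and the fixed ADMM penalty $\rho$---none of which involve $\gamma^{(k)}$, $t$, or $\epsilon$. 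The regularization level enters the iteration \emph{only} through $\prox_{\gamma^{(k)}/\rho\, P(\cdot; \bw, q)}$, which is $1$-Lipschitz for every $\gamma^{(k)}$, so it cannot enter the contraction factor. To pass from the weighted norm to the additive form in the lemma, I would use equivalence of norms on the relevant finite-dimensional space and absorb the equivalence constants into a slightly larger $c$ that is still strictly less than $1$.

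The main obstacle is careful bookkeeping: most textbook statements phrase the ADMM contraction on the pair $(\bV, \bZ)$ or on $\bZ$ alone rather than on $(\bU, \bZ)$. Converting to the form in the lemma requires two auxiliary estimates using formulas explicit in Algorithm \ref{alg:short_carp}: the closed-form $\bU$-update (i) bounds $\|\bU^{(k)} - \hat{\bU}_{\gamma^{(k)}}\|$ by a constant times $\|\bV^{(k-1)} - \hat{\bV}_{\gamma^{(k)}}\| + \|\bZ^{(k-1)} - \hat{\bZ}_{\gamma^{(k)}}\|$ via the operator norm of $\bL^{-T}\bL^{-1}\bD^T$, while nonexpansiveness of the prox in (ii) bounds $\|\bV^{(k)} - \hat{\bV}_{\gamma^{(k)}}\|$ by $\|\bD(\bU^{(k)} - \hat{\bU}_{\gamma^{(k)}})\| + \|\bZ^{(k-1)} - \hat{\bZ}_{\gamma^{(k)}}\|$. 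Combining these auxiliary bounds with the core ADMM contraction, and again absorbing bounded data-independent constants, yields the claimed contraction in the additive $(\bU, \bZ)$ norm with a constant still uniform in $\gamma^{(k)}$, and hence independent of $t$ and $\epsilon$.
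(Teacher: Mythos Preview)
Your approach is the same as the paper's: identify one \carp step with one ADMM step and invoke linear convergence of the ADMM for strongly convex problems. The paper simply cites the literature (in particular \citet{Hong:2017}, noting that the constraint matrix $\begin{pmatrix}\bI\otimes\bD & -\bI\end{pmatrix}$ is row-independent) to obtain a rate $c_\lambda<1$, and then---rather than arguing that the rate is independent of $\gamma^{(k)}$---sets $c=\sup_{\lambda\le\lambda_{\max}}c_\lambda$ to get uniformity. Your direct claim that $\tilde c$ cannot depend on $\gamma^{(k)}$ ``because the prox is $1$-Lipschitz'' is not by itself a proof (nonexpansiveness of the prox is already baked into every ADMM analysis, yet many published rate bounds still depend on the regularizer); the paper's supremum-over-a-compact-range trick sidesteps this cleanly and is what actually delivers independence from $t$ and $\epsilon$.

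There is also a genuine gap in your norm-conversion step. Equivalence of norms does \emph{not} preserve one-step contractions: from $\|x^{(k)}-x^*\|_M\le\tilde c\,\|x^{(k-1)}-x^*\|_M$ and $\alpha\|\cdot\|\le\|\cdot\|_M\le\beta\|\cdot\|$ you only get $\|x^{(k)}-x^*\|\le(\beta/\alpha)\tilde c\,\|x^{(k-1)}-x^*\|$, and $(\beta/\alpha)\tilde c$ need not be $<1$. Since Lemma~\ref{lem:err_bound} applies the inequality once per step with a \emph{changing} target $\hat{\bW}_{\gamma^{(k)}}$, R-linear convergence is not enough---you really need the Q-linear statement in the additive $(\bU,\bZ)$ norm. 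Your auxiliary bounds from steps (i)--(ii) are the right ingredients, but they must be assembled so that the final constant is provably $<1$, not merely bounded; the paper avoids this by citing a result that already delivers primal--dual linear convergence directly.
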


\begin{proof}
  By construction, each \carp step is a single iteration of
  the ADMM for the convex clustering problem (Algorithm \ref{alg:admm}) initialized at $(\bU^{(k-1)}, \bV^{(k-1)}, \bZ^{(k-1)})$. Hence it suffices to analyze the convergence of the ADMM for the convex clustering problem and to establish linear convergence.
  
  The convex clustering problem \eqref{eqn:cclust} is strongly convex due to squared Frobenius norm term. Linear convergence of the standard ADMM for strongly convex problems was first shown by \citet{Lions:1979} and has since been refined by several other authors including \citet{Shi:2014}, \citet{Nishihara:2015}, \citet{Deng:2016}, and \citet{Yang:2016}.
  
  In vectorized form, with $\bx = \vecop(\bX)$, $\bu = \vecop(\bU)$, and $\bv = \vecop(\bV)$, the convex clustering problem \eqref{eqn:cclust} can be expressed as:
  \[\argmin_{\bu \in \R^{np}, \bv \in \R^{|\mathcal{E}|p}} \frac{\|\bx - \bu\|_2^2}{2} + \lambda \|\bv\|_{\vecop(\ell_q)} \quad \text{ subject to } \quad (\bI \otimes \bD)\bu = \bv\]
  where $\|\cdot\|_{\vecop(\ell_q)}$ is an appropriately vectorized version of the row-wise $\ell_q$ norm (a standard $\ell_1$ norm in the case $q = 1$ and a mixed $\ell_q/\ell_1$ norm otherwise) and we have omitted the fusion weights for brevity.
  
  In the notation of \citet{Hong:2017}, the constraint matrix for the convex clustering problem is given by $\bE = \begin{pmatrix} \bI \otimes \bD & -\bI \end{pmatrix}$, for appropriately sized identity matrices, which is clearly row-independent, yielding linear convergence of the primal and dual variables at a rate $c_{\lambda} < 1$ which may depend on $\lambda$. (We do not need to verify their additional technical assumptions as we are only using a two-block ADMM instead of the more general multi-block ADMM which is the focus of their paper.) Taking $c = \sup_{\lambda \leq \lambda_{\max}} c_{\lambda}$, we observe that the \carp iterates are uniformly Q-linearly convergent at a rate $c$.
\end{proof}

\begin{rem}
  Recently, \citet{Deng:2016} gave a readable and precise analysis of the linear convergence of the ADMM, including estimates of the convergence rate $c$. The specific proof technique they employ does not strictly apply to the convex clustering problem \ref{eqn:cclust}, however, as the $\bD$ matrix is rank-deficient (excluding their Scenario 1) and the norm used for the fusion penalty is non-differentiable at the origin (excluding their Scenario 3). If an estimate of the convergence rate is required, the analysis of \citet{Deng:2016} can be applied to the convex clustering problem \ref{eqn:cclust} by re-parameterizing the problem to address the rank-deficiency of $\bD$. In particular, if the redundant rows of $\bD$ are combined (eliminating the nullspace of $\bD$), the resulting matrix will be full-row rank, allowing Scenario 1 and Case 2 of \citet{Deng:2016} to be applied. This re-parameterization results in different split and dual variables ($\bV$ and $\bZ$, corresponding to the $\bD$ matrix), however, so we do not pursue that approach here. The primal variable, $\bU$, remains unchanged under this re-parameterization.
\end{rem}

\vspace{0.1in}

\begin{lem}[Lipschitz Continuity of Solution Paths] \label{lem:lip_paths}

  $(\hat{\bU}_{\lambda}, \hat{\bZ}_{\lambda})$ is $L$-Lipschitz with
  respect to $\lambda$. That is,
  \[\|\hat{\bU}_{\lambda_1} - \hat{\bU}_{\lambda_2}\| +  \|\hat{\bZ}_{\lambda_1} - \hat{\bZ}_{\lambda_2}\| \leq L*|\lambda_1 - \lambda_2|\]
  for some $L > 0$.

\end{lem}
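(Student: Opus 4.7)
The plan is to derive Lipschitz continuity of $(\hat{\bU}_\lambda, \hat{\bZ}_\lambda)$ from the KKT conditions of problem \eqref{eqn:cclust}, leveraging strong convexity of the squared Frobenius loss and monotonicity of the subdifferential of the fusion penalty $P(\cdot; \bw, q)$.

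I would first write the primal first-order optimality condition. Since the squared Frobenius loss renders \eqref{eqn:cclust} strongly convex, $\hat{\bU}_\lambda$ is unique, and there exists a row-wise subgradient $\bS_\lambda \in \partial P(\bD\hat{\bU}_\lambda; \bw, q)$ satisfying $\hat{\bU}_\lambda - \bX + \lambda\, \bD^T \bS_\lambda = 0$. Because each row of $\bS_\lambda$ is a subgradient of the scaled norm $w_l\|\cdot\|_q$, the dual-norm bound $\|(\bS_\lambda)_{l\cdot}\|_{q^*} \leq w_l$ yields the uniform estimate $\|\bS_\lambda\|_F \leq W := \sqrt{\sum_l w_l^2}$, independent of $\lambda$. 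This uniform boundedness of the subgradients is the essential ingredient that converts local KKT information into a global Lipschitz bound.

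Next I would subtract the optimality conditions at $\lambda_1$ and $\lambda_2$ to get $\hat{\bU}_{\lambda_1} - \hat{\bU}_{\lambda_2} = -\bD^T(\lambda_1 \bS_{\lambda_1} - \lambda_2 \bS_{\lambda_2})$ and take the Frobenius inner product with $\hat{\bU}_{\lambda_1} - \hat{\bU}_{\lambda_2}$. The decomposition $\lambda_1\bS_{\lambda_1} - \lambda_2 \bS_{\lambda_2} = \lambda_1(\bS_{\lambda_1} - \bS_{\lambda_2}) + (\lambda_1-\lambda_2)\bS_{\lambda_2}$, together with monotonicity of $\partial P$ --- which forces $\langle \bS_{\lambda_1} - \bS_{\lambda_2},\, \bD(\hat{\bU}_{\lambda_1}-\hat{\bU}_{\lambda_2})\rangle \geq 0$ --- lets me drop the $\lambda_1$-term with a favorable sign. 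Cauchy--Schwarz applied to the surviving term, combined with the uniform bound on $\bS_{\lambda_2}$, yields
\[\|\hat{\bU}_{\lambda_1} - \hat{\bU}_{\lambda_2}\|_F^2 \leq |\lambda_1 - \lambda_2| \,\|\bD\|_{\mathrm{op}}\,W\,\|\hat{\bU}_{\lambda_1}-\hat{\bU}_{\lambda_2}\|_F,\]
so dividing through gives primal Lipschitz continuity with constant $L_U = \|\bD\|_{\mathrm{op}}W$.

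The dual is handled analogously, since the ADMM KKT system forces $\hat{\bZ}_\lambda$ to be proportional to $\lambda \bS_\lambda$ and the stationarity relation rearranges to $\bD^T(\hat{\bZ}_{\lambda_1}-\hat{\bZ}_{\lambda_2})$ being (a constant times) $\hat{\bU}_{\lambda_1}-\hat{\bU}_{\lambda_2}$, which is already Lipschitz by the primal estimate. The main obstacle is that $\bD^T$ possesses a non-trivial nullspace (the cycle space of the fusion graph) whenever the weight graph contains cycles, so $\hat{\bZ}_\lambda$ is not uniquely determined and a bound on $\bD^T \hat{\bZ}_\lambda$ does not transfer to $\hat{\bZ}_\lambda$ for free. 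I would resolve this by fixing $\hat{\bZ}_\lambda$ to be the canonical choice implicitly selected by the ADMM (equivalently the minimum-norm dual optimum), which lies in the orthogonal complement of $\ker \bD^T$; on this subspace $\bD^T$ is bounded below by its smallest non-zero singular value $\sigma_{\min}(\bD)$, giving $\|\hat{\bZ}_{\lambda_1}-\hat{\bZ}_{\lambda_2}\|_F \leq \bigl(L_U/\sigma_{\min}(\bD)\bigr) |\lambda_1-\lambda_2|$. Combining with the primal bound yields the lemma with $L = L_U\bigl(1 + 1/\sigma_{\min}(\bD)\bigr)$.
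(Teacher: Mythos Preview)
Your primal argument is correct and takes a genuinely different route from the paper. The paper differentiates the KKT condition $0 = \bu_\lambda - \bx + \lambda\,\tilde{\bD}^T f_q'(\tilde{\bD}\bu_\lambda)$ with respect to $\lambda$ (at points where the norm is differentiable) to obtain an explicit formula for $\partial \bu_\lambda/\partial \lambda$, bounds this derivative using boundedness of norm gradients, and then patches the non-differentiable points by invoking continuity of the solution path to upgrade piecewise to global Lipschitz continuity. You instead subtract the KKT conditions at $\lambda_1$ and $\lambda_2$ and exploit monotonicity of $\partial P$ together with the uniform subgradient bound directly. Your version is more elementary---it avoids the implicit-function-theorem style differentiation, the need to argue that differentiability holds almost everywhere \emph{along the path}, and the separate continuity citation---and it delivers an explicit constant $L_U = \|\bD\|_{\mathrm{op}}W$ in one stroke. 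The paper's approach has the compensating advantage of making the local derivative of the path visible, which is natural if one later wants ODE-based or predictor--corrector arguments.

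One caution on your dual step: the claim that the ADMM ``implicitly selects'' the minimum-norm dual optimum, i.e.\ the one lying in $(\ker \bD^T)^\perp = \operatorname{range}(\bD)$, is not automatic. The update $\bZ^{(k+1)} = \bZ^{(k)} + \bD\bU^{(k+1)} - \bV^{(k+1)}$ does not preserve membership in $\operatorname{range}(\bD)$ because $\bV^{(k+1)}$ need not lie there, so you should either justify this selection or simply \emph{define} $\hat{\bZ}_\lambda$ as the minimum-norm dual solution for the purposes of the lemma. The paper is no more careful here---it only remarks that ``a similar argument'' or the primal--dual relations in \citet{Tan:2015} handle $\hat{\bZ}_\lambda$---so your treatment is, if anything, more explicit about the issue; just do not attribute the selection to the ADMM without proof.
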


We note that this not the only form of Lipschitz continuity commonly
considered for regularized estimation problems. In particular,
Lipschitz continuity of the \emph{solution with respect to the data}
is a key element of various consistency results, while Lipschitz
continuity of the \emph{objective function with respect to the
  parameters} is a key assumption used to prove convergence of many
optimization schemes.

\begin{proof} It suffices to prove Lipschitz continuity of $\hat{\bU}_{\lambda}$ and
$\hat{\bZ}_{\lambda}$ separately and then take the sum of their
  Lipschitz moduli as the joint Lipschitz modulus.

We first show that $\hat{\bU}_{\lambda}$ is Lipschitz. In vectorized
form, the convex clustering problem is
\[\hat{\bu}_{\lambda} = \argmin_{\bu \in \R^{np}} \frac{1}{2}\|\bu - \bx\|_2^2 + \lambda f_q(\tilde{\bD}\bu)\]
where $\bu = \vecop(\bU)$, $\bx = \vecop(\bX)$, $f_q$ is a convex function, and $\tilde{\bD} = \bI \otimes \bD$
is a fixed matrix \citep[\emph{cf.}][]{Tan:2015}.

The KKT conditions give
\[0 \in \bu_{\lambda} - \bx + \lambda \tilde{\bD}^T\partial f_q(\tilde{\bD}\bu_{\lambda}) \]
where $\partial f_q(\cdot)$ is the subdifferential of $f_q$. Since $f_q$ is
convex, it is differentiable almost everywhere \citep[Theorem 25.5]{Rockafellar:1970},
so the following holds for almost all $\bu_{\lambda}$:
\[0 = \bu_{\lambda} - \bx + \lambda \tilde{\bD}^T f_q'(\tilde{\bD}\bu_{\lambda}) \]
Differentiating with respect to $\lambda$, we obtain \citep[\emph{c.f.}][]{Rosset:2007}
\begin{align*}
  0 &= \bu_{\lambda} - \bx + \lambda \tilde{\bD}^Tf'_q(\tilde{\bD}\bu_{\lambda}) \\
  \frac{\partial}{\partial \lambda}\left[0\right] &= \frac{\partial}{\partial \lambda}\left[\bu_{\lambda} - \bx + \lambda \tilde{\bD}^Tf'_q(\tilde{\bD}\bu_{\lambda})\right] \\
  0 &= \frac{\partial \bu_{\lambda}}{\partial \lambda} - 0 + \lambda \frac{\partial}{\partial \lambda}\left[\tilde{\bD}^Tf'_q(\tilde{\bD}\bu_{\lambda})\right] + \tilde{\bD}^Tf'_q(\tilde{\bD}\bu_{\lambda}) \\
  0 &= \frac{\partial \bu_{\lambda}}{\partial \lambda} + \lambda \tilde{\bD}^Tf_q''(\tilde{\bD}\bu_{\lambda})\tilde{\bD}\frac{\partial \bu_{\lambda}}{\partial \lambda} + \tilde{\bD}^Tf'_q(\tilde{\bD}\bu_{\lambda}) \\
\implies \frac{\partial \bu}{\partial \lambda} &= -[\bI + \lambda\tilde{\bD}^T f''_q(\tilde{\bD}\bu) \tilde{\bD}]^{-1} \bD^Tf'_q(\tilde{\bD}\bu).
\end{align*}
Note that $\bu_{\lambda}$ depends on $\lambda$ so the chain rule must be used here. From here, we note
\[\left\|\frac{\partial \bu_{\lambda}}{\partial \lambda}\right\|_{\infty}=\left\| -[\bI + \lambda\tilde{\bD}^T f''_q(\tilde{\bD}\bu_{\lambda}) \tilde{\bD}]^{-1} \tilde{\bD}^Tf'_q(\tilde{\bD}\bu_{\lambda}) \right\|_{\infty}\leq\| -[\bI + 0]^{-1} \tilde{\bD}^Tf'_q(\tilde{\bD}\bu_{\lambda}) \|_{\infty} = \|\tilde{\bD}^T f'_q(\tilde{\bD}\bu_{\lambda})\|_{\infty}.\]

For the convex clustering problem, we recall that $f_q(\cdot)$ is a norm and hence has bounded gradient; hence $f'_q(\tilde{\bD}\bu_{\lambda})$ is bounded so the gradient of the regularization path is bounded and exists almost everywhere. This implies that the regularization path is \emph{piecewise} Lipschitz. Since the solution path is constant for $\lambda \geq \lambda_{\max}$ and is continuous \citep[Proposition 2.1]{Chi:2015}, the solution path is globally Lipschitz with a Lipschitz modulus equal to the maximum of the piecewise Lipschitz moduli.

A similar argument shows Lipschitz continuity of $\hat{\bZ}_{\lambda}$ or one can use the relationships between $\hat{\bU}_{\lambda}$ and $\hat{\bZ}_{\lambda}$ discussed
in Section 2.1 of \citet{Tan:2015}.

\end{proof}

\begin{lem}[Global Error Bound] \label{lem:err_bound}
The following error bound holds for all $k$:

\[\|\bU^{(k)} - \hat{\bU}_{\gamma^{(k)}}\| + \|\bZ^{(k)} - \hat{\bZ}_{\gamma^{(k)}} \|\leq c^kL\epsilon + L(t - 1)\epsilon t^k \sum_{i=1}^{k-1} \left(\frac{c}{t}\right)^i\]
\end{lem}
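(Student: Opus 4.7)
The plan is to introduce the total error $E_k := \|\bU^{(k)} - \hat{\bU}_{\gamma^{(k)}}\| + \|\bZ^{(k)} - \hat{\bZ}_{\gamma^{(k)}}\|$, derive a one-step linear recurrence for $E_k$ by combining the geometric contraction of Lemma \ref{lem:q_linear} with the Lipschitz path bound of Lemma \ref{lem:lip_paths}, and then unroll the recurrence from the initialization. The intuition is that each \carp iteration contracts the error toward the new target by a factor $c < 1$, and the only new error injected is the movement of the target itself, which is controlled linearly by the step in $\gamma$.

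To derive the recurrence, I first apply Lemma \ref{lem:q_linear} to obtain
\[E_k \leq c \left[\|\bU^{(k-1)} - \hat{\bU}_{\gamma^{(k)}}\| + \|\bZ^{(k-1)} - \hat{\bZ}_{\gamma^{(k)}}\|\right].\]
Inserting the previous target via the triangle inequality and using Lemma \ref{lem:lip_paths} yields
\[\|\bU^{(k-1)} - \hat{\bU}_{\gamma^{(k)}}\| + \|\bZ^{(k-1)} - \hat{\bZ}_{\gamma^{(k)}}\| \leq E_{k-1} + L\, |\gamma^{(k)} - \gamma^{(k-1)}|.\]
The geometric update $\gamma^{(k)} = \epsilon t^{k}$ gives $|\gamma^{(k)} - \gamma^{(k-1)}| = \epsilon t^{k-1}(t-1)$, so the recurrence is
\[E_k \leq c\, E_{k-1} + c L \epsilon (t-1)\, t^{k-1}.\]
Straightforward induction then yields
\[E_k \leq c^k E_0 + L \epsilon (t-1) \sum_{j=1}^{k} c^{j} t^{k-j} = c^k E_0 + L \epsilon (t-1)\, t^k \sum_{j=1}^{k} \left(\frac{c}{t}\right)^{j},\]
which already matches the structure of the claimed bound up to indexing of the geometric sum.

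It remains to handle the base case. Because $\hat{\bU}_{\lambda = 0} = \bX = \bU^{(0)}$ by construction, and the dual initialization $\bZ^{(0)} = \bD\bX$ can be identified with the exact dual $\hat{\bZ}_0$ from the KKT conditions at $\lambda = 0$, Lemma \ref{lem:lip_paths} bounds the displacement of the exact joint path from $\lambda = 0$ to $\lambda = \epsilon$ by $L \epsilon$, giving $E_0 \leq L\epsilon$. Substituting this into the unrolled inequality (and if necessary reindexing the geometric sum by one) produces the stated bound. The main obstacle in this proof is the base-case handling for the dual iterate: different conventions for $\bZ^{(0)}$ can produce a constant offset from $\hat{\bZ}_0$, in which case one absorbs that offset into the $c^k L \epsilon$ prefactor using one additional application of Lemma \ref{lem:q_linear}. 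Everything else is routine geometric-series bookkeeping, and the same recurrence-plus-unroll argument applies unchanged to the \carp algorithm with any $\ell_q$-fusion penalty, since Lemmas \ref{lem:q_linear} and \ref{lem:lip_paths} both go through for $q \neq 2$.
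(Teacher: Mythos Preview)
Your proposal is correct and follows essentially the same approach as the paper: apply Lemma~\ref{lem:q_linear} for the contraction, insert the previous target via the triangle inequality, bound the target shift with Lemma~\ref{lem:lip_paths}, and induct from the base case $E_0 \leq L\epsilon$. The only difference is presentational---you phrase the argument as a one-step recurrence $E_k \leq c\,E_{k-1} + cL\epsilon(t-1)t^{k-1}$ and unroll it, whereas the paper writes out the cases $k=1,2$ explicitly before the general inductive step---and your sum-to-$k$ closed form with leading term $c^k L\epsilon$ is algebraically equivalent to the sum-to-$(k-1)$ form with leading term $c^k L t\epsilon$ that the paper's induction actually establishes.
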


\begin{proof} Throughout, we let \[\hat{\bW}_{\lambda} = \begin{pmatrix} \hat{\bU}_{\lambda} \\ \hat{\bZ}_{\lambda}\end{pmatrix} \text{ and } \hat{\bW}^{(k)} = \begin{pmatrix} \bU^{(k)} \\ \bZ^{(k)} \end{pmatrix}.\]

Our proof proceeds by induction on $k$. First note that, at initialization:
\[\|\bW^{(0)} - \hat{\bW}_{\epsilon}\| \leq L\epsilon\]
by Lemma \ref{lem:lip_paths}.

Next, at $k=1$, we note that
\[\|\bW^{(1)} - \hat{\bW}_{t\epsilon}\| \leq c \|\bW^{(0)} - \hat{\bW}_{t\epsilon}\| \]
by Lemma \ref{lem:q_linear}. We now the triangle inequality to split the right hand side:
\[\|\bW^{(0)} - \hat{\bW}_{t\epsilon}\| \leq \underbrace{\|\bW^{(0)} - \hat{\bW}_{\epsilon}\|}_{\text{RHS-1}} + \underbrace{\|\hat{\bW}_{\epsilon} - \hat{\bW}_{t\epsilon}\|}_{\text{RHS-2}}\]
From above, we have $\text{RHS-1} \leq L\epsilon$. Using Lemma \ref{lem:lip_paths},
$\text{RHS-2}$ can be bounded by
\[\|\hat{\bW}_{\epsilon} - \hat{\bW}_{t\epsilon}\| \leq L \left|t\epsilon - \epsilon\right| = L(t-1)\epsilon.\]

Putting these together, we get
\[\|\bW^{(1)} - \hat{\bW}_{t\epsilon}\| \leq c \left[\text{RHS-1} + \text{RHS-2}\right] \leq c\left[L\epsilon + L(t-1)\epsilon\right] = cLt\epsilon \]

Repeating this argument for $k=2$, we see
\begin{align*}
\|\bW^{(2)} - \hat{\bW}_{t^2\epsilon}\| &\leq c \|\bW^{(1)} - \hat{\bW}_{t^2\epsilon}\| \\
                                        &\leq c\left[\|\bW^{(1)} - \hat{\bW}_{t\epsilon}\| + \|\hat{\bW}_{t\epsilon} - \hat{\bW}_{t^2\epsilon}\|\right] \\
                                        &\leq c\left[cLt\epsilon + L\left|t^2\epsilon - t\epsilon\right|\right] \\
                                        &= c^2Lt\epsilon + cL(t-1)\epsilon * t \\
                                        &= c^2Lt\epsilon + L\epsilon(t-1)t^2 * \left(\frac{c}{t}\right) \\
                                        &= c^2Lt\epsilon + L\epsilon(t-1)t^2 * \sum_{i=1}^{k-1} \left(\frac{c}{t}\right)^i
                                        \end{align*}

We use this as a base case for our inductive proof and prove the  general case:
\begin{align*}
\|\bW^{(k)} - \hat{\bW}_{t^k\epsilon}\| &\leq c \|\bW^{(k-1)} - \hat{\bW}_{t^k\epsilon}\| \\
                                        &\leq c\left[\|\bW^{(k-1)} - \hat{\bW}_{t^{k-1}\epsilon}\| + \|\hat{\bW}_{t^{k-1}\epsilon} - \hat{\bW}_{t^k\epsilon}\|\right] \\
                                        &\leq c\left[c^{k-1}Lt\epsilon + L\epsilon(t-1)t^{k-1} \sum_{i=1}^{k-2} \left(\frac{c}{t}\right)^i + L\left|t^k\epsilon - t^{k-1}\epsilon\right|\right] \\
                                        &= c^{k}Lt\epsilon + cL\epsilon(t-1)t^{k-1}\sum_{i=1}^{k-2}\left(\frac{c}{t}\right)^i + cL\epsilon(t^k - t^{k-1}) \\
                                        &= c^{k}Lt\epsilon + L\epsilon(t-1)t^k\left[\frac{c}{t}\sum_{i=1}^{k-2}\left(\frac{c}{t}\right)^i + \frac{c}{t}\right] \\
                                        &= c^{k}Lt\epsilon + L\epsilon(t-1)t^k\left[\sum_{i=2}^{k-1}\left(\frac{c}{t}\right)^i + \frac{c}{t}\right] \\
                                        &= c^{k}Lt\epsilon + L\epsilon(t-1)t^k\sum_{i=1}^{k-1}\left(\frac{c}{t}\right)^i
                                        \end{align*}
Expanding the definitions of $\bW^{(k)}, \hat{\bW}_{\lambda}$, we get the
desired result.

\end{proof}

With these results, we are now ready to prove Theorem \ref{thm:hausdorff}:

\hausdorff*

\begin{proof}[Proof of Theorem \ref{thm:hausdorff}]

It suffices to show that $\{\bW^{(k)}\}, \{\hat{\bW}_{\lambda}\}$
converge in the Hausdorff metric to show that the primal and dual paths converge
separately. We break our proof into three steps:
\begin{enumerate}[i.]
\item $\sup_{\lambda} \inf_k \left\|\bW^{(k)} - \hat{\bW}_{\lambda}\right\| \to 0$;
\item $\epsilon t^{k^*}$ remains bounded as $t, \epsilon$ decrease and $k^*$ increases, where $k^*$ is the iteration at which \carp terminates; and
\item $\sup_{k} \inf_{\lambda} \left\|\bW^{(k)} - \hat{\bW}_{\lambda}\right\| \to 0$.
\end{enumerate}
Together, these give the desired result.

\textbf{Step i.} We first show that \[\sup_{\lambda} \inf_k \left\|\bW^{(k)} - \hat{\bW}_{\lambda}\right\|\]
tends to zero. We begin by fixing temporarily $\lambda$ and bounding
\[\inf_k \left\|\bW^{(k)} - \hat{\bW}_{\lambda}\right\|\]
The infimum over all $k$ is less than the distance at any particular $k$,
so it suffices to choose \emph{a} value of $k$ which gives convergence to 0. Let $\tilde{k}$ be the value of $k$ which
gives the closest value of $\gamma^{(k)}$ to $\lambda$ along the \carp path; and let
$\tilde{\lambda} = \gamma^{(\tilde{k})} = \epsilon t^{\tilde{k}}$. That is,
\[\tilde{k} = \argmin_k |\gamma^{(k)} - \lambda| \quad \text{ and } \tilde{\lambda} = \gamma^{(\tilde{k})}\]
Then
\[\inf_k \left\|\bW^{(k)} - \hat{\bW}_{\lambda}\right\| \leq \|\bW^{(\tilde{k})} - \hat{\bW}_{\lambda}\| \leq \underbrace{\|\bW^{(\tilde{k})} - \hat{\bW}_{\tilde{\lambda}}\|}_{\text{RHS-1}} + \underbrace{\|\hat{\bW}_{\tilde{\lambda}} - \hat{\bW}_{\lambda}\|}_{\text{RHS-2}}\]

Using Lemma \ref{lem:lip_paths}, we can bound $\text{RHS-2}$ as
\[\text{RHS-2} \leq L |\tilde{\lambda} - \lambda| \leq L | \gamma^{(\tilde{k} + 1)} - \gamma^{(\tilde{k} - 1)}| = L * \epsilon t^{\tilde{k}-1} * [t^2-1] = L * \epsilon t^{\tilde{k}-1} * [t^2 -1] \leq L * \lambda_{\max} * [t^2 -1]\]

Using Lemma \ref{lem:err_bound}, we can bound $\text{RHS-1}$ as
\begin{align}
\text{RHS-1} &= \|\bW^{(\tilde{k})} - \hat{\bW}_{\tilde{\lambda}}\| \notag \\ & = \|\bW^{(\tilde{k})} - \hat{\bW}_{\gamma^{(\tilde{k})}}\| \notag \\ &= c^{\tilde{k}}L\epsilon + L(t-1) * \epsilon t^{\tilde{k}} \sum_{i=1}^{k-1} \left(\frac{c}{1+t}\right)^i \leq c^{\tilde{k}}L\epsilon + L(t-1) * \epsilon t^{\tilde{k}} * C \label{eqn:rhs_bound}
\end{align} where $C = \sum_{i=1}^{\infty} \left(\frac{c}{1+t}\right)^i$ is large but finite.
Since $c < 1$ and $\tilde{\lambda} = \epsilon t^{\tilde{k}} \leq \lambda_{\max}$, we can replace the $k$-dependent
quantities to get
\[\text{RHS-1} = \|\bW^{(\tilde{k})} - \hat{\bW}^{\tilde{\lambda}}\| \leq L\epsilon + C * L(t-1) * \lambda_{\max}\]

Putting these together, we have
\[\inf_{k} \|\bW^{(k)} - \hat{\bW}_{\lambda}\| \leq \text{RHS-1} + \text{RHS-2} \leq L\epsilon + C * L(t-1) * \lambda_{\max} + L * \lambda_{\max} * [t^2 -1]\]
Since the right-hand side doesnt' depend on $\lambda$, we have
\[\sup_{\lambda} \inf_{k} \|\bW^{(k)} - \hat{\bW}_{\lambda}\| \leq L\epsilon + C * L(t-1) * \lambda_{\max} + L * \lambda_{\max} * [t^2 -1]\]
As $(t, \epsilon) \to (1, 0)$, we have that the right-hand side converges to zero
and hence
\[\sup_{\lambda} \inf_{k} \|\bW^{(k)} - \hat{\bW}_{\lambda}\| \to 0\]
as desired.

\textbf{Step ii.} Before showing the other half of Hausdorff convergence, we pause
to prove an intermediate result: As $(t, \epsilon) \to (1, 0)$, $\epsilon t^{k^*}$ remains bounded,
where $k^* = k^*(t, \epsilon)$ is the iteration at which \carp halts.
For this step, we specialize to the $\ell_2$-case for concreteness, though our results
are easily generalized.

\carp terminates when $\|\bV^{(k+1)}\|_{\infty, q} = \max_{i, j} \|\bU^{(k+1)}_{i\cdot} - \bU^{(k+1)}_{j\cdot}\|_q = 0$;
that is, \carp terminates when all of the pairwise differences have gone to zero and the data has
been fused into a single cluster.

Note that the update
\[\bV^{(k + 1)}_{i} = \left[1 - \frac{w_i \gamma^{(k)}}{\|(\bD\bU^{(k+1)} + \bZ^{(k)})_i\|_2}\right](\bD\bU^{(k+1)} + \bZ^{(k)})_i\]
will set $\bV^{(k + 1)}_{i}$ to zero when
\[\|(\bD\bU^{(k+1)} + \bZ^{(k)})_i\|_2 < w_{i} \gamma^{(k)}\]
Letting $(j, k)$ be the endpoints of edge $i$, we find
\begin{align*}
\|(\bD\bU^{(k+1)} + \bZ^{(k)})_i\|_2 &= \|\bU^{(k+1)}_{j\cdot} - \bU^{(k+1)}_{k\cdot} + \bZ^{(k)}_{i\cdot}\|_2 \\
                                     &= \left\|(\bU^{(k+1)}_{j\cdot} - \overline{\bx}) - (\bU^{(k+1)}_{k\cdot} -\overline{\bx}) + \bZ^{(k)}_{i\cdot}\right\|_2 \\
                                     &\leq \left\|\bU^{(k+1)}_{j\cdot} - \overline{\bx}\right\|_2 + \left\|\bU^{(k+1)}_{k\cdot} -\overline{\bx}\right\|_2 + \left\| \bZ^{(k)}_{i\cdot}\right\|_2
\end{align*}
where $\overline{\bx}$ is the column-wise mean of $\bX$.

Our strategy will be to show that this quantity is less that $w_{i} \gamma^{(k)}$
for some $k > k^{*}$ small enough that $\epsilon t^{k}$ remains bounded
and hence $\epsilon t^{k^*}$ remains bounded. Let \[\tilde{k} = \left\lceil\frac{\log(\lambda_{\max}/\epsilon)}{\log(t)}\right\rceil = \lceil\log_{t}(\lambda_{\max}/\epsilon)\rceil\]
be the first value of $k$ such that $\gamma^{(k)} > \lambda_{\max}$, \emph{i.e.}, the
value of $\lambda$ such that all of the pairwise differences have gone to zero
and the data has been fused into a single cluster in the regularization path ($\tilde{k}$ is to the regularization path as $k^*$ is to the \carp path).

Using the bound from Equation \eqref{eqn:rhs_bound}, we have that
\[\|\bU^{(k+1)}_{j\cdot} - \overline{\bx}\| = \|\bu^{(k+1)}_l - (\hat{\bU}_{\lambda_{\max}})_{j\cdot}\| < L\epsilon + C * L(t-1) * \lambda_{\max}\]
so
\[\|\bU^{(k+1)}_{j\cdot}- \overline{\bx}\| + \|\bU^{(k+1)}_{k\cdot} - \overline{\bx}\| < 2\left(L\epsilon + C * L(t-1) * \lambda_{\max}\right)\]

Bounding $\|\bZ^{(k)}_{i}\|_2$ is more subtle, but a rough bound can be obtained
again using Equation \eqref{eqn:rhs_bound} to obtain:
\[\|\bZ^{(k)}_{i\cdot} - (\hat{\bZ}_{\lambda_{\max}})_{i\cdot}\|_2 < L\epsilon + C * L(t-1) * \lambda_{\max}\]
so
\[\|\bZ^{(k)}_{i\cdot}\| \leq  \|(\hat{\bZ}_{\lambda_{\max}})_{i\cdot}\|_2 + L\epsilon + C * L(t-1) * \lambda_{\max}\]

Putting these together, we obtain
\[\|\bu^{(k+1)}_l - \bu^{(k+1)}_m - \bz^{(k)}_{l, m}\|_2 \leq \|(\hat{\bz}_{\lambda_{\max}})_{l, m}\|_2 + 3\left(L\epsilon + C * L(t-1) * \lambda_{\max}\right)\]

To stop, we require that
\[\|(\hat{\bZ}_{\lambda_{\max}})_{i\cdot}\|_2 + 3\left(L\epsilon + C * L(t-1) * \lambda_{\max}\right) < w_{l, m} \underbrace{\epsilon t^k}_{\gamma^{(k)}}\]
which occurs when
\[k > \log_{t}\frac{\|(\hat{\bZ}_{\lambda_{\max}})_{i\cdot}\|_2 + 3(L\epsilon + C * L(t-1) * \lambda_{\max})}{w_{l, m} \epsilon}\]
Taking the max over all $(l, m)$-pairs we find
\[k^* \leq \max_{l, m} \log_{t}\frac{\|(\hat{\bz}_{\lambda_{\max}})_{l, m}\|_2 + 3(L\epsilon + C * L(t-1) * \lambda_{\max})}{w_{l, m} \epsilon}\]
Hence it suffices to note
\[\epsilon t^{k^*} \leq \epsilon t^{\max_{i} \log_{t}\frac{\|(\hat{\bZ}_{\lambda_{\max}})_{i\cdot}\|_2 + 3(L\epsilon + C * L(t-1) * \lambda_{\max})}{w_{i} \epsilon}} \leq \max_{i}  \frac{\|(\hat{\bZ}_{\lambda_{\max}})_{i\cdot}\|_2 + 3(L\epsilon + C * L(t-1) * \lambda_{\max})}{w_{i}}\]
which clearly remains bounded as $t, \epsilon \to (1, 0)$.

\textbf{Step iii.} With this result in hand, the proof is similar to the first half.
Again, we can invoke Lemma \ref{lem:err_bound} to find that
\[\inf_{\lambda} \|\bW^{(k)} - \hat{\bW}_{\lambda}\| \leq \|\bW^{(k)} - \hat{\bW}_{\epsilon t^k}\| \leq c^k L\epsilon + CL * (t-1) * \epsilon t^k \]
With the result from above, $\epsilon t^k$ remains bounded above by some $B < \infty$,
so
\[\sup_k \inf_{\lambda} \|\bW^{(k)} - \hat{\bW}_{\lambda}\| = \sup_{1 \leq k \leq k^*} \inf_{\lambda} \|\bW^{(k)} - \hat{\bW}_{\lambda}\| \leq \sup_{1 \leq k \leq k^*} c^k L\epsilon + CL * (t-1) * \epsilon t^k \leq L\epsilon + CL * (t-1) * B\]

As $(t, \epsilon) \to (1, 0)$, the right hand-side goes to zero so
\[\sup_k \inf_{\lambda} \|\bW^{(k)} - \hat{\bW}_{\lambda}\| \to 0\]

Combining this with step i, we have
\[d_H(\{\bW^{(k)}\}, \{\hat{\bW}_{\lambda}\}) = \max\left\{\sup_{\lambda} \inf_k \left\|\bW^{(k)} - \hat{\bW}_{\lambda}\right\|, \sup_{k} \inf_{\lambda} \left\|\bW^{(k)} - \hat{\bW}_{\lambda}\right\|\right\} \xrightarrow{(t, \epsilon) \to (1, 0)} 0\]
as desired.
\end{proof}

\section{\cbass: Algorithmic Regularization Paths for Convex Bi-Clustering} \label{sec:cbass}

Having explored the computational, theoretical, and practical advantages of \carp, we now turn to the closely related problem of bi-clustering. \emph{Bi-clustering} refers to the simultaneous clustering of rows and columns. Building on the convex clustering formulation \eqref{eqn:cclust}, \citet{Chi:2017} propose the following convex formulation of bi-clustering:
\begin{equation}
\hat{\bU}_{\lambda} = \argmin_{\bU \in \R^{n \times p}} \frac{1}{2} \|\bU - \bX\|_F^2 + \lambda \left(\sum_{\substack{i, j = 1 \\ i \neq j}}^n w_{ij} \|\bU_{i\cdot} - \bU_{j\cdot}\|_q + \sum_{\substack{k, l = 1 \\ k \neq l}}^p \tilde{w}_{kl} \|\bU_{\cdot k} - \bU_{\cdot l}\|_q\right).\label{eqn:bclust}
\end{equation}
The second penalty term induces row fusions, similarly to how the first term induces column fusions. The resulting $\hat{\bU}_{\lambda}$ has a `checkerboard' pattern where groups of rows and columns are clustered together. Note that for bi-clustering the centroids are \emph{scalars} instead of vectors as in the clustering case. 

Despite their relatively similar appearances, the convex bi-clustering problem \eqref{eqn:bclust} is significantly more complicated than the convex clustering problem \eqref{eqn:cclust} and cannot be directly solved directly using an operator splitting method. \citet{Chi:2017} propose the use of the \emph{Dykstra-Like Proximal Algorithm} (DLPA) of \citet{Bauschke:2008} to solve the convex bi-clustering problem \eqref{eqn:bclust} and refer to the resulting algorithm as \cobra (\textbf{Co}nvex \textbf{B}i-Cluste\textbf{R}ing \textbf{A}lgorithm). \cobra works by alternating solving row- and column-wise convex clustering problems until convergence. As with convex clustering, calculating the bi-clustering solution path with sufficient accuracy to accurately reconstruct both row and column dendrograms poses significant computational burden, which is exacerbated by \cobra's requirement to evaluate several convex clustering subproblems for each value of $\lambda$. While \carp could be used to solve each subproblem quickly, we would still have to run \carp many times, incurring a non-trivial total cost. 

Instead, we apply the technique of algorithmic regularization to \cobra directly: we take only a single DLPA step and, within that step, we take only a single ADMM step for each of the row- and column-subproblems. We refer to the resulting algorithm as \cbass--\textbf{C}onvex \textbf{B}i-clustering via \textbf{A}lgorithmic Regularization with \textbf{S}mall \textbf{S}teps. Details of the \cbass algorithm are given in Algorithm \ref{alg:cbass} below. Our \clustRviz software implements \cbass with and without a back-tracking step to ensure exact recovery or both the row- and column-dendrograms. 

\subsection{Algorithms for Convex Bi-Clustering}

The DLPA can be used to solve problems of
the form \[\prox_{(f + g)(\cdot)}(\br) = \argmin_{\bx} \frac{1}{2}\|\bx - \br\|_2^2 + f(\bx) + g(\bx)\]
where $f$ and $g$ are proximable but $f + g$ is not using the following iterative algorithm (see also Algorithm 10.18 in \citet{Combettes:2011}):

\begin{algorithm}[H]
\caption{DLPA: Dykstra-Like Proximal Algorithm}
\label{alg:dlpa-general}
\begin{enumerate}
\item Initialize: $\bx^{(0)} = \br$, $\bp^{(0)} = \bq^{(0)} = 0$, $k = 0$
\item Repeat until convergence:
\begin{itemize}
\item $\by = \prox_f(\bx^{(k)} + \bp^{(k)})$
\item $\bp^{(k+1)} = \bp^{(k)} + \bx^{(k)} - \by$
\item $\bx^{(k+1)} = \prox_g(\by^{(k+1)} + \bq^{(k)})$
\item $\bq^{(k + 1)} = \bq^{(k)} + \by - \bx^{(k + 1)}$
\item $k := k + 1$
\end{itemize}
\item Return $\bx^{(k)}$
\end{enumerate}
\end{algorithm}

To apply Algorithm \ref{alg:dlpa-general} to convex bi-clustering \eqref{eqn:bclust}, we note that the problem can be rewritten as: 
\begin{equation*}
\argmin_{\bU \in \R^{n \times p}} \frac{1}{2} \|\bU - \bX\|_F^2 + \underbrace{\lambda\left(\sum_{(e_l, w_l) \in \mathcal{E}_{\text{row}}} w_l  \|(\bD_{\text{row}}\bU)_{l\cdot}\|_q\right)}_{f(\bU) = P_{\text{row}}(\bU; \bw_{\text{row}}, q)} + \underbrace{\lambda\left(\sum_{(e_l, w_l) \in \mathcal{E}_{\text{col}}} w_l  \|(\bU\bD_{\text{col}})_{\cdot l}\|_q\right)}_{g(\bU) = P_{\text{col}}(\bU; \bw_{\text{col}}, q)}
\end{equation*}
and apply the DLPA with $f(\bU) = P_{\text{row}}(\bU; \bw_{\text{row}}, q)$ and $g(\bU) = P_{\text{col}}(\bU; \bw_{\text{col}}, q)$. We note that $\prox_f$ is a standard convex clustering problem and can be evaluated using the ADMM or AMA approaches described above. To evaluate $\prox_g$, we note that $\|(\bU\bD_{\text{col}})_{l\cdot}\|_q = \|(\bD_{\text{col}}^T\bU^T)_{\cdot l}\|_q$ so we simply need to perform standard convex clustering on transposed data. The DLPA then becomes: 

\begin{algorithm}[H]
\caption{DLPA for Convex Bi-Clustering}
\begin{enumerate}
\item Initialize: $\bU^{(0)} = \bX$, $\bP^{(0)} = \bQ^{(0)} = 0$, $k = 0$
\item Repeat until convergence:
\begin{itemize}
\item $\bT = \textsf{Convex-Clustering}(\bU^{(k)} + \bP^{(k)}; \mathcal{E}_{\text{row}})$
\item $\bP^{(k+1)} = \bP^{(k)} + \bX^{(k)} - \bT$
\item $\bU^{(k+1)} = \textsf{Convex-Clustering}((\bQ^{(k)} + \bT)^T; \mathcal{E}_{\text{col}})^T$
\item $\bQ^{(k + 1)} = \bQ^{(k)} + \bT - \bU^{(k + 1)}$
\item $k := k + 1$
\end{itemize}
\item Return $\bU^{(k)}$
\end{enumerate}
\end{algorithm}
\vspace{-0.2in}
Expanding the $\textsf{Convex-Clustering}$ steps with the ADMM from Algorithm \ref{alg:admm} yields Algorithm \ref{alg:dlpa}. To obtain \cbass from Algorithm \ref{alg:dlpa}, we replace the inner row- and column-subproblem loops with a single iteration of the convex clustering ADMM. Additionally, we do not reset the auxiliary $\bU, \bP, \bQ$ variables, instead carrying forward their values from each \cbass iteration to the next. These two modifications yield \cbass (Algorithm \ref{alg:cbass}).

\begin{algorithm}[p]
\footnotesize
\caption{Warm-Started DLPA + ADMM for the Convex Bi-Clustering Problem \eqref{eqn:bclust}}
\label{alg:dlpa}
\begin{enumerate}
\item Input:
\begin{itemize}
\item Data Matrix: $\bX \in \R^{n \times p}$
\item Weighted Directed Edge Sets: $\mathcal{E}_{\text{row}} = \{(e_l, w_l)\}$, $\mathcal{E}_{\text{col}} = \{(e_l, w_l)\}$
\item Relaxation Parameter: $ \rho \in \R_{> 0}$
\item Initial Regularization Parameter $\epsilon$ and Multiplicative Step-Size $t$
\end{itemize}
\item Precompute:
\begin{itemize}
\item Difference Matrices: $\bD_{\text{row}} \in \R^{|\mathcal{E}_{\text{row}}| \times n}$ and $\bD_{\text{col}} \in \R^{p \times |\mathcal{E}_{\text{col}}|}$
\item Cholesky Factors: $\bL_{\text{row}} = \textsf{chol}(\bI + \rho \bD_{\text{row}}^T\bD_{\text{row}}) \in \R^{n \times n}$ and $\bL_{\text{col}} = \textsf{chol}(\bI + \rho \bD_{\text{col}}\bD_{\text{row}}^T) \in \R^{p \times p}$
\end{itemize}
\item Initialize:
\begin{itemize}
\item $\bU^{(0)} = \bX$
\item $\bV^{(0)}_{\text{row}} = \bZ^{(0)}_{\text{row}}  = \bD_{\text{row}}\bX$
\item $\bV^{(0)}_{\text{col}} = \bZ^{(0)}_{\text{col}}  = (\bX\bD_{\text{col}})^T = \bD_{\text{col}}^T\bX^T$
\item $\bP^{(0)} = \bQ^{(0)} = 0$
\item $l = 0$, $\lambda_0 = \epsilon$, $k = 0$, 
\end{itemize}
\item Repeat until $\|\bV^{(k)}_{\text{row}}\| = \|\bV^{(k)}_{\text{col}}\|  = 0$:
\begin{itemize}
\item Repeat Until Convergence:
\begin{itemize}
\item Row Sub-Problem -- Repeat Until Convergence:
\begin{enumerate}
\item[(i)] $\bT = \bL_{\text{row}}^{-T}\bL^{-1}_{\text{row}}\left(\bU^{(k)} + \bP^{(k)} + \rho \bD_{\text{row}}^T(\bV_{\text{row}}^{(k)} - \bZ_{\text{row}}^{(k)}\right)$
\item[(ii)] $\bV^{(k+1)}_{\text{row}} = \prox_{\lambda_l / \rho\, P(\cdot; \bw_{\text{row}}, q)}\left(\bD_{\text{row}} \bT + \bZ^{(k)}_{\text{row}}\right)$
\item[(iii)] $\bZ^{(k + 1)}_{\text{row}} = \bZ^{(k)}_{\text{row}} + \bD \bT - \bV^{(k+1)}_{\text{row}}$
\end{enumerate}
\item $\bP^{(k+1)} = \bP^{(k)} + \bU^{(k - 1)} - \bT$
\item Column Sub-Problem -- Repeat Until Convergence:
\begin{enumerate}
\item[(i)] $\bS = \bL^{-T}_{\text{col}}\bL^{-1}\left((\bT + \bQ^{(k)})^T + \rho \bD_{\text{col}}(\bV^{(k)}_{\text{col}} - \bZ^{(k)}_{\text{col}}\right)$
\item[(ii)] $\bV^{(k+1)}_{\text{col}} = \prox_{\lambda_l / \rho\, P(\cdot; \bw_{\text{col}}, q)}\left(\bD^T_{\text{col}} \bS + \bZ^{(k)}_{\text{col}}\right)$
\item[(iii)] $\bZ^{(k + 1)}_{\text{col}} = \bZ^{(k)} + \bD_{\text{col}}^T \bS - \bV^{(k+1)}_{\text{col}}$
\end{enumerate}
\item $\bU^{(k+1)} = \bS^T$
\item $\bQ^{(k + 1)} = \bQ^{(k)} + \bT - \bU^{(k+1)}$
\item $k := k + 1$
\end{itemize}
\item Store $\hat{\bU}_{\lambda_l} = \bU^{(k)}$
\item Reset Auxiliary Variables: $\bU^{(k+1)} = \bX$, $\bP^{(k+1)} = \bQ^{(k+1)} = 0$
\item Update Regularization Parameter $\lambda_l := \lambda_{l - 1} * t$, $l := l + 1$
\end{itemize}
\item Return $\left\{\hat{\bU}_{\lambda_i}\right\}_{i = 0}^{l - 1}$ as the regularization path
\end{enumerate}
\end{algorithm}

\begin{algorithm}
\caption{\cbass: Convex Bi-Clustering via Algorithmic regularization with Small Steps}
\label{alg:cbass}
\begin{enumerate}
\item Input:
\begin{itemize}
\item Data Matrix: $\bX \in \R^{n \times p}$
\item Weighted Directed Edge Sets: $\mathcal{E}_{\text{row}} = \{(e_l, w_l)\}$, $\mathcal{E}_{\text{col}} = \{(e_l, w_l)\}$
\item Relaxation Parameter: $ \rho \in \R_{> 0}$
\item Initial Regularization Parameter $\epsilon$ and Multiplicative Step-Size $t$
\end{itemize}
\item Precompute:
\begin{itemize}
\item Difference Matrices: $\bD_{\text{row}} \in \R^{|\mathcal{E}_{\text{row}}| \times n}$ and $\bD_{\text{col}} \in \R^{p \times |\mathcal{E}_{\text{col}}|}$
\item Cholesky Factors: $\bL_{\text{row}} = \textsf{chol}(\bI + \rho \bD_{\text{row}}^T\bD_{\text{row}}) \in \R^{n \times n}$ and $\bL_{\text{col}} = \textsf{chol}(\bI + \rho \bD_{\text{col}}\bD_{\text{row}}^T) \in \R^{p \times p}$
\end{itemize}
\item Initialize:
\begin{itemize}
\item $\bU^{(0)} = \bX$
\item $\bV^{(0)}_{\text{row}} = \bZ^{(0)}_{\text{row}}  = \bD_{\text{row}}\bX$
\item $\bV^{(0)}_{\text{col}} = \bZ^{(0)}_{\text{col}}  = (\bX\bD_{\text{col}})^T = \bD_{\text{col}}^T\bX^T$
\item $\bP^{(0)} = \bQ^{(0)} = 0$
\item $k = 0$, $\gamma^{(0)} = \epsilon$ 
\end{itemize}
\item Repeat until $\|\bV^{(k)}_{\text{row}}\| = \|\bV^{(k)}_{\text{col}}\|  = 0$:
\begin{itemize}
\item Row Updates:
\begin{enumerate}
\item[(i)] $\bT = \bL_{\text{row}}^{-T}\bL^{-1}_{\text{row}}\left(\bU^{(k)} + \bP^{(k)} + \rho \bD_{\text{row}}^T(\bV_{\text{row}}^{(k)} - \bZ_{\text{row}}^{(k)}\right)$
\item[(ii)] $\bV^{(k+1)} = \prox_{\gamma^{(k)} / \rho\, P(\cdot; \bw_{\text{row}}, q)}\left(\bD_{\text{row}} \bT + \bZ^{(k)}\right)$
\item[(iii)] $\bZ^{(k + 1)}_{\text{row}} = \bZ^{(k)}_{\text{row}} + \bD_{\text{row}} \bT - \bV^{(k+1)}_{\text{row}}$
\end{enumerate}
\item $\bP^{(k+1)} = \bP^{(k)} + \bU^{(k - 1)} - \bT$
\item Column Updates:
\begin{enumerate}
\item[(i)] $\bS = \bL^{-T}_{\text{col}}\bL^{-1}\left((\bT + \bQ^{(k)})^T + \rho \bD_{\text{col}}(\bV^{(k)}_{\text{col}} - \bZ^{(k)}_{\text{col}}\right)$
\item[(ii)] $\bV^{(k+1)}_{\text{col}} = \prox_{\gamma^{(k)} / \rho\, P(\cdot; \bw_{\text{col}}, q)}\left(\bD^T_{\text{col}} \bS + \bZ^{(k)}_{\text{col}}\right)$
\item[(iii)] $\bZ^{(k + 1)}_{\text{col}} = \bZ^{(k)} + \bD_{\text{col}}^T \bS - \bV^{(k+1)}_{\text{col}}$
\end{enumerate}
\item $\bU^{(k+1)} = \bS^T$
\item $\bQ^{(k + 1)} = \bQ^{(k)} + \bT - \bU^{(k+1)}$
\item $k := k + 1$, $\gamma^{(k)} = \gamma^{(k - 1)} * t$
\end{itemize}
\item Return $\left\{\bU^{(k)}\right\}_{i = 0}^k$ as the \cbass algorithmic regularization path
\end{enumerate}
\end{algorithm}

\clearpage

\subsection{Visualizations for Convex Bi-Clustering}

While it is possible to construct row- and column-wise \cbass analogues of the \carp dendrogram and path plots discussed above, the primary visualization associated with bi-clustering is the \emph{cluster heatmap}, which combines a heatmap visualization of the raw data with independent row- and column-dendrograms \citep{Wilkinson:2009}. We modify the standard cluster heatmap by creating dendrograms using the fusions identified by \cbass. As \citet{Chi:2017} argue, the joint estimation of dendrograms provided by convex bi-clustering often produces better results than independent dendrogram construction. 

We applied \cbass to the \presidents data and show the resulting cluster heatmap in Figure \ref{fig:heatmap}. A close examination reveals several interesting patterns. This data clearly exhibits a bi-clustered structure, with certain words being strongly associated with certain groups of presidents. Examining the two clear bi-clusters on the left, we see that words such as ``billion,'' ``soviet,'' and ``technology'' are frequently used by modern presidents and rarely used by pre-modern presidents. Conversely, we see that words which may be considered somewhat antiquated, such as  ``vessel'' or ``shall,'' are associated with pre-modern presidents. For data with less clear structure, the interpretability of the cluster heatmap can sometimes be increased by plotting the smoothed estimates $\bU^{(k)}$ rather than the raw data.

\begin{figure}
  \centering
  \includegraphics[width=\textwidth]{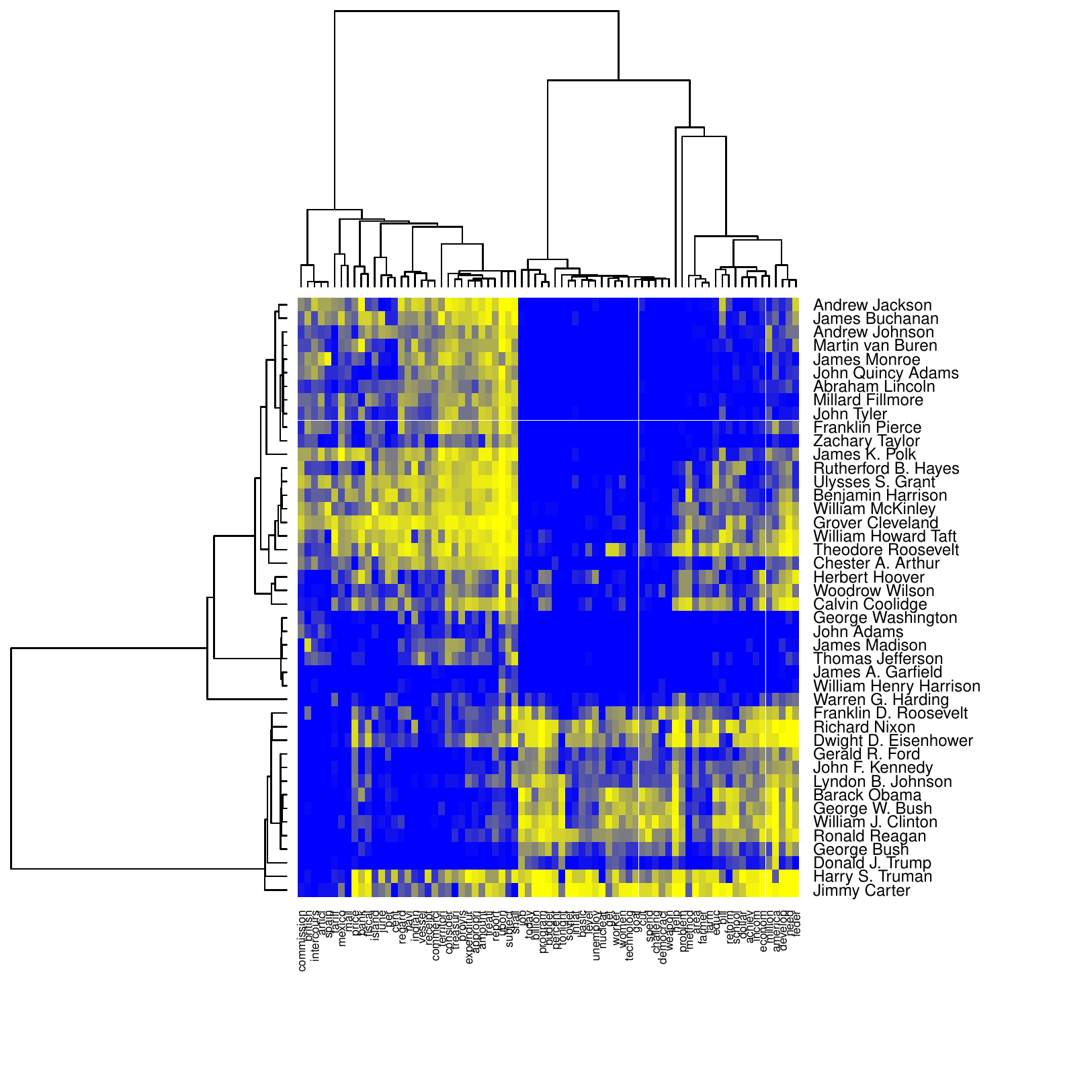}
  \caption{ Cluster heatmap of the \presidents data, with row and column dendrograms jointly estimated by \cbass. The partitions estimated by \cbass clearly associated modern words with modern presidents and old-fashioned words with pre-modern presidents.}
  \label{fig:heatmap}
\end{figure}

In simulation studies, \cbass appears to converge to the exact regularization path as $t \to 1$. While this is consistent with both our theory and observations for \carp, we leave the theoretical analysis of \cbass to future work. As far as we know, a rate of convergence has not been established for the DLPA in the optimization literature, without which the techniques used to prove Theorem \ref{thm:hausdorff} cannot be applied to \cbass. 

\section{Additional Comparisons}
\label{app:additional_comparisons}

Figure \ref{fig:accuracy_full} compares the accuracy of \carp, \cbass, hierarchical clustering, and $K$-means clustering on the \tcga and \authors data sets discussed in Section \ref{sec:comparisons}. While certain forms of hierarchical clustering perform well on this data, \carp achieves superior performance without requiring the user to select a distance or linkage. 

Figure \ref{fig:accuracy_gmm} compares the performance of \carp, hierarchical clustering with Euclidean distance and Ward's, complete, and single linkage, and $K$-means on data simulated from a Gaussian mixture model. The cluster centroids were equally spaced on a 2-dimensional subspace and $n = 54$ observations were generated from a Gaussian distribution with unit variance centered at the cluster centroid. Each of the clustering methods exhibit similar behaviors, with improved performance as the inter-cluster distance increases and decreased performance with higher ambient dimensionality or more clusters. Because these data were generated from isotropic Gaussians, all methods except single linkage hierarchical clustering perform well. 

Figure \ref{fig:accuracy_shapes} compares the performance of the same methods on  non-convex clusters. In particular, we consider a version of the ``half-moons'' example proposed by \citet{Hocking:2011}. (See also Figure \ref{fig:moons}.) The data were generated on a two-dimensional subspace with $n = 50$ observations from each cluster and Gaussian noise orthogonal to the signal subspace were added. Not surprisingly, the performance of all methods degrades as the degree of noise and the ambient dimensionality are increased. Despite this, we see that \carp and single-linkage hierarchical clustering clearly outperform other methods, with \carp being more robust to the presence of noise. 

Comparing these two simulations, we see that only convex clustering (\carp) is able to consistently perform well on both the convex and non-convex simulated data without requiring the user to select a distance metric or linkage. This is in large part due to the sparse weighting scheme used in the \clustRviz package, which is able to flexibly and robustly adapt to the observed data distribution. Our findings should be contrasted with those of \citet{Tan:2015} who focus only on the case of uniform weights and show that, without informative weights, convex clustering performs similarly to single linkage convex clustering. 

\begin{figure}
\centering
   \includegraphics[width=6in,height=6in]{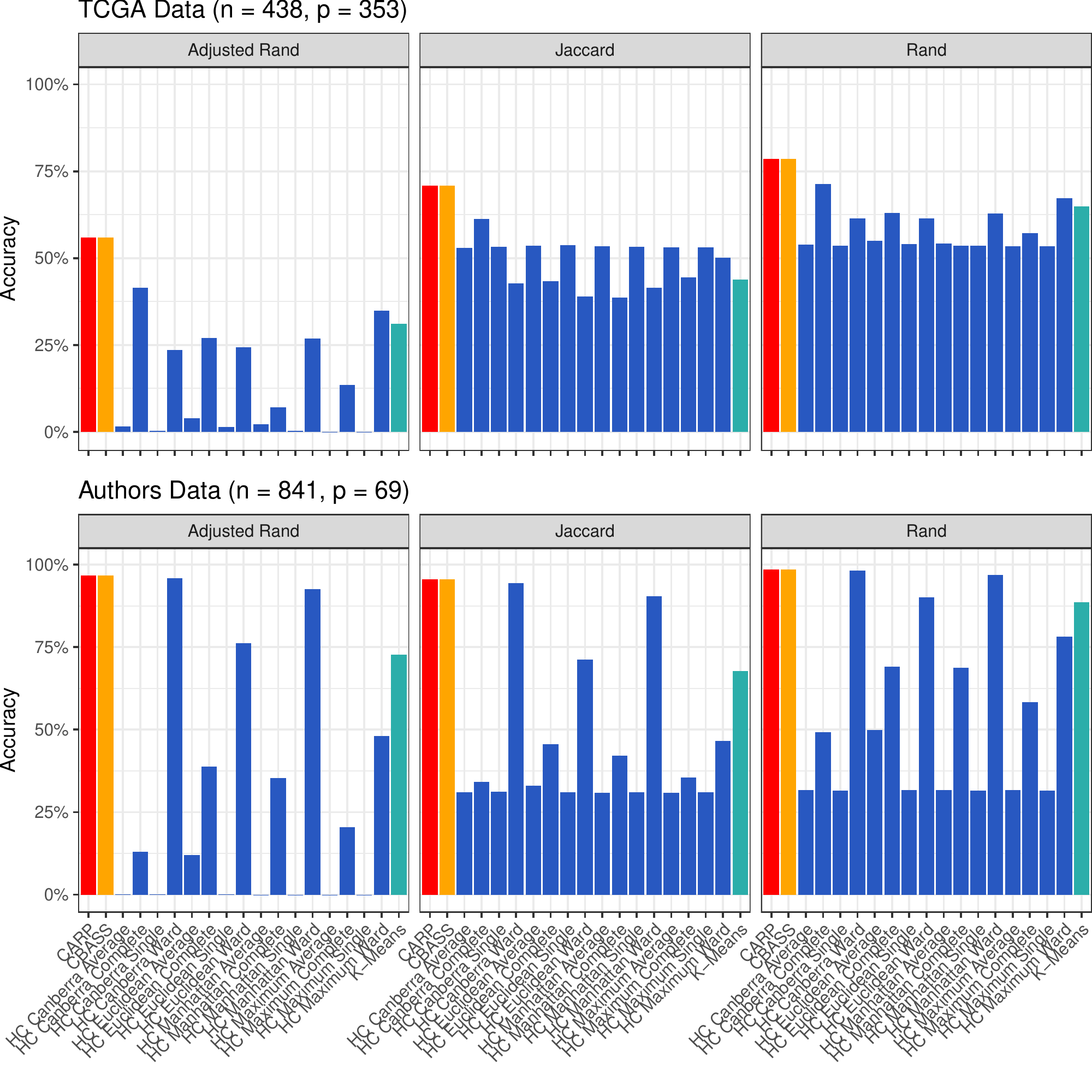}
    \caption{ Accuracy of convex clustering (\carp, red), convex bi-clustering (\cbass, yellow, discussed in Section \ref{sec:cbass}), Hierarchical Clustering (\texttt{HC}, blue), and $K$-means Clustering (teal) on the \textsf{TCGA} data set. \carp and \cbass consistently outperform both hierarchical and $K$-means clustering, as measured by the Rand \citep{Rand:1971}, Adjusted Rand \citep{Hubert:1985}, and Jaccard indices. \carp and \cbass were run using \clustRviz's default settings ($t = 1.05$ and $t = 1.01$ respectively).}
    \label{fig:accuracy_full}
\end{figure} 

\begin{figure}
\centering
   \includegraphics[width=6in,height=6in]{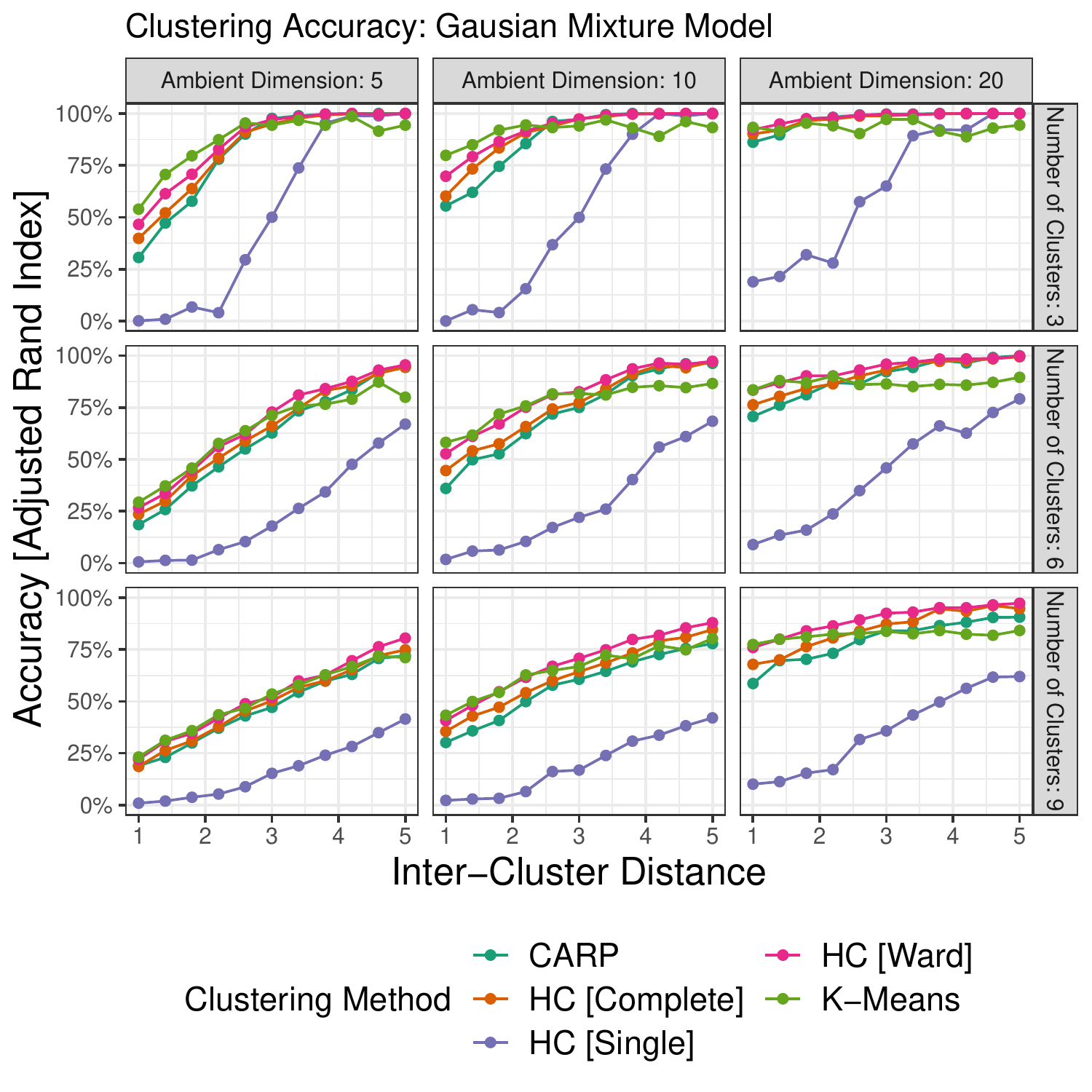}
    \caption{Accuracy of convex clustering, hierarchical clustering with Euclidean distance and several linkages, and $K$-means clustering (teal) on data simulated from a Gaussian mixture model, as measured by the Adjusted Rand \citep{Hubert:1985} index. Because these clusters are spherical, with sufficient inter-cluster separation all methods except hierarchical clustering with single linkage perform well.}
    \label{fig:accuracy_gmm}
\end{figure} 

\begin{figure}
\centering
   \includegraphics[width=6in,height=6in]{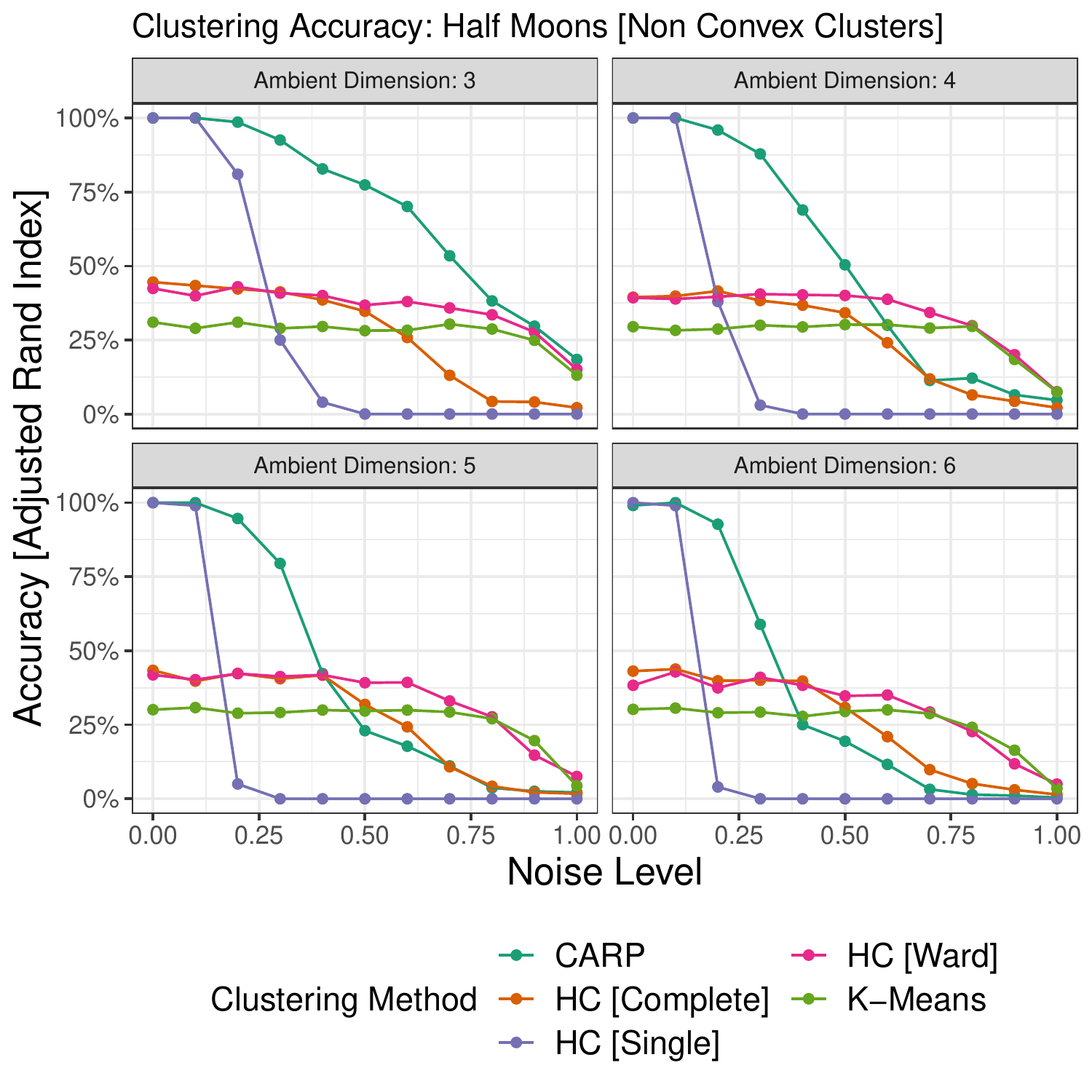}
    \caption{Accuracy of convex clustering, hierarchical clustering with Euclidean distance and several linkages, and $K$-means clustering (teal) on data simulated from a the two-circles and two-half-moons model, as measured by the Adjusted Rand \citep{Hubert:1985} index. Note that only \carp and single linkage hierarchical clustering are able to adapt to the non-convex cluster shapes.}
    \label{fig:accuracy_shapes}
\end{figure} 

\begin{figure}
\centering
   \includegraphics[width=3in,height=3in]{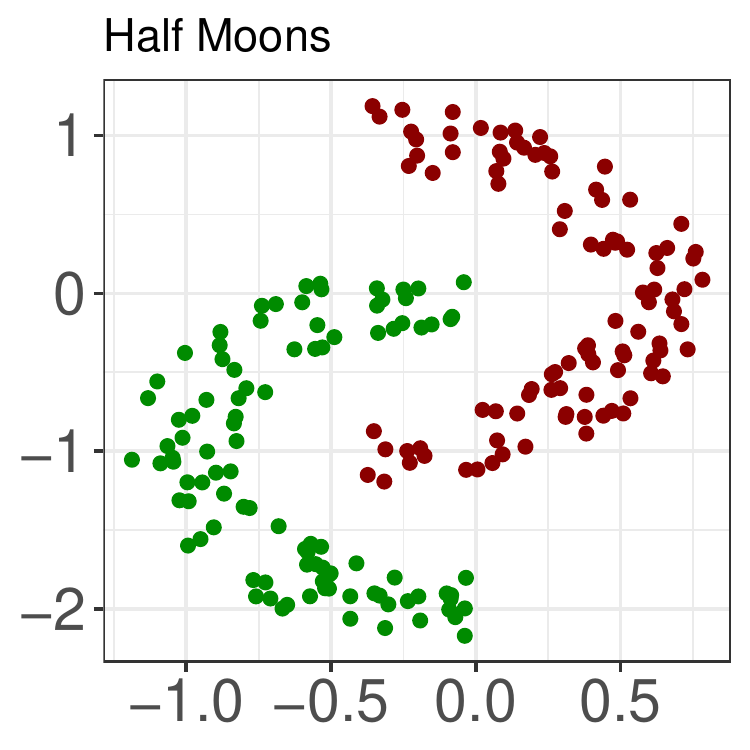}
    \caption{A sample realization of the interlocking half moons test data of \citet{Hocking:2011} used for Figure \ref{fig:accuracy_shapes}. Data were generated from these clusters along a random two-dimensional subspace and Gaussian noise orthogonal to the signal subspace was added to increase the difficulty of the clustering problem. \carp and single-linkage hierarchical clustering are able to exactly recover the true clustering in the noiseless case; the performance of hierarchical clustering quickly degrades as more noise is added, however, while \carp is more robust.}
    \label{fig:moons}
\end{figure}

\section{Back-Tracking, Post-Processing, and Dendrogram Construction}
\label{app:backtrack}
The \carpviz variant of our \carp algorithm implements a back-tracking scheme in order to improve dendrogram recovery. Because a relatively large value of $\lambda$ is typically required for any fusions to occur in convex clustering \eqref{eqn:cclust}, \carpviz begin with a large step-size (by default, $t = 1.1$) and performs standard \carp iterations until the first fusion is identified (\emph{i.e.}, a row of $\bV^{(k)}$ is set to zero). After the first fusion is identified, \carpviz switches to a smaller step-size (by default, $t = 1.01$) for the remainder of the algorithm. At each iteration, \carpviz counts the number of fusions that occur. If more than one fusion occurs, instead of proceeding, \carpviz attempts to determine which fusion occurred first. It does so using a back-tracking scheme, similar to those used in optimization methods. \carpviz discards the iteration with multiple fusions, halves the step-size, and performs another iteration. If this half-step iteration has only one fusion, \carpviz accepts it and continues as before. Otherwise, \carpviz again halves the step-size and repeats this process until the correct order of fusions is identified (or a limit on the number of back-tracking steps is hit). Once the first fusion is identified, \carpviz resets $t$ and continues. \cbassviz uses essentially the same scheme, though it checks for both row and column fusions. We have found that, because it only uses a small step-size at ``interesting'' parts of the solution space, this back-tracking scheme typically produces more accurate dendrogram recovery at less expense than running standard \carp with a very small step-size. 

Once \carp or \carpviz terminate, \clustRviz performs an additional post-processing step to isolate individual fusions. \clustRviz reviews the fusions at each iteration and, if an iteration has multiple fusions, linearly interpolates between $\bU^{(k)}$ and $\bU^{(k+1)}$ to determine the approximate regularization level at which each fusion occurred. The interpolated iterate is only approximate, but is necessary for dendrogram construction. We note that no interpolation is typically needed for \carpviz results, due to the back-tracking step used to isolate individual fusions, but, by default, \clustRviz post-processes both \carp and \carpviz output. The same post-processing scheme is applied separately to the row and column fusions from \cbass. 

Once post-processing is performed, a dendrogram is constructed from the interpolated iterates. The dendrogram construction proceeds in the opposite order as hierarchical clustering: we begin with the fully fused data and decrease $\gamma^{(k)}$, noting the order in which centroids were fused. (We use the reverse ordering so that, in the rare case where the path contains fissions, the \emph{final} fusion is reflected in the resulting dendrogram.) The dendrogram height associated with each fusion is the $\gamma^{(k)}$ at which that fusion is first observed. Finally, we check whether fusions are more uniformly distributed on the $\gamma^{(k)}$ scale or the $\log(\gamma^{(k)})$ and adjust the dendrogram height accordingly to provide less cluttered visualizations.

Since the weight selection, post-processing, and dendrogram reconstruction steps could potentially be applied to any convex clustering algorithm, they are omitted from all timing results shown in this paper.

\section{Additional Related Work}
\label{app:related}
Following its original introduction by \citet{Pelckmans:2005} and popularization by \citet{Hocking:2011} and \citet{Lindsten:2011}, convex clustering has been the subject of much methodological and theoretical research. In this section, we review some of this related work which, while not directly relevant to the computational or visualization strategies we propose, may be of interest to readers interested in convex clustering. 

The convex clustering problem can be generalized as \[\hat{\bU}_{\lambda} = \argmin_{\bU \in \R^{n \times p}} \frac{1}{2} \|\bX - \bU\|_F^2 + \lambda \sum_{\substack{i, j = 1 \\ i < j}}^n w_{ij} p\left(\bU_{i\cdot} - \bU_{j\cdot}\right)\] where $p(\cdot)$ is any sparsity-inducing function. The choice of an $\ell_q$-norm ($p(\cdot) = \|\cdot\|_q$) gives standard convex clustering as considered in this paper. \citet{Pan:2013}, \citet{Marchetti:2014}, \citet{Wu:2016}, and \citet{Shah:2017} have all considered the use of non-convex choices of $p(\cdot)$, typically using the popular SCAD or MCP penalty functions to reduce bias and improve estimation performance \citep{Fan:2001,Zhang:2010}. We do not consider non-convex $p(\cdot)$ in this paper, though the computational techniques and visualizations we propose could be adapted to non-convex penalties in a relatively straightforward manner.

Restricting our attention to standard convex clustering ($p(\cdot) = \|\cdot\|_q$), several useful methodological extensions have been proposed in the literature. For example, \citet{Wang:2016} augment the convex clustering problem \eqref{eqn:cclust} with an additional sparse component to add robustness to outliers, similar to the robust PCA formulation of \citet{Candes:2011}, while \citet{Wang:2018} propose a variant which incorporates feature selection into the clustering objective using an $\ell_1$ penalty \citep{Tibshirani:1996}. As discussed in Section \ref{sec:cbass}, \citet{Chi:2017} extend convex clustering to the bi-clustering setting, where rows and columns are simultaneously clustered. Building on this work, \citet{Chi:2018b} extend bi-clustering to general co-clustering of $k$-order tensors, where they note several surprising theoretical advantages. The recent paper by \citet{Park:2018} extends convex clustering to \emph{histogram-valued} data by replacing the Euclidean distance with an appropriate metric on the space of histograms.

The squared Frobenius loss function of the convex clustering problem may be interpreted as an isotropic Gaussian likelihood, suggesting another avenue for generalization. \citet{Sui:2018} replace the Frobenius loss with a squared Mahalanobis distance to improve performance on non-spherical clusters. If the metric (inverse covariance matrix) is known, simple variants on the techniques used in this paper may be used; if the metric must be estimated from the data, the resulting problem is bi-convex and an alternating minimization scheme must be used, only guaranteeing convergence to a stationary point. 

The use of a convex formulation allows the sophisticated tools of modern high-dimensional statistics to be brought to bear \citep{Buhlmann:2011,Hastie:2015}. In addition to the work of \citet{Tan:2015} proving a form prediction consistency and of \citet{Radchenko:2017} proving asymptotic dendrogram recovery, \citet{Zhu:2014} give sufficient conditions for exact cluster recovery in the two-cluster case. The results of \citet{Zhu:2014} were later extended by \citet{Panahi:2017} and by \citet{Sun:2018} to the more general multi-cluster case. 

In addition to the general purpose operator-splitting algorithms proposed by \citet{Chi:2015}, specialized algorithms have been proposed for convex clustering in the ``large $n$'' (many observations) setting. \citet{Panahi:2017} propose a stochastic incremental algorithm based on the framework of \citet{Bertsekas:2011}, while \citet{Sun:2018} propose a semi-smooth Newton algorithm based on the framework of \citet{Li:2016}. \citet{Chen:2015} propose a proximal distance-based algorithm \citep{Lange:2014} and provide a GPU-based implementation. Recently, \citet{Ho:2019} proposed a generalized dual gradient ascent algorithm with linear convergence, though their approach only works for the $q = 1$ case; their approach is likely amenable to algorithmic regularization schemes similar to those we have propose for the ADMM.

The special case of convex clustering in $\R$ has been studied under various names, including \emph{total variation denoising} \citep{Rudin:1992}, the \emph{edge lasso} \citep{Sharpnack:2012} and the \emph{graph-fused lasso} \citep{Hoefling:2010}, or as a special case of the generalized lasso \citep{Tibshirani:2011}. When the underlying graph is a chain graph, convex clustering simplifies to the well-studied \emph{fused lasso} problem \citep{Tibshirani:2005,Rinaldo:2009,Johnson:2013}. 

Convex clustering has not yet seen significant adoption outside of the statistics and machine learning communities, though \citet{Chen:2015} discuss applications to human genomics. \citet{Nagorski:2018} propose an alternative weighting scheme based on genetic distances which they use to perform genomic region segmentation.

\printbibliography[title=Additional References,heading=subbibliography]
\end{refsection}

\end{document}